\theoremstyle{plain}
\newtheorem{corollary}{Corollary}
\newtheorem{proposition}{Proposition}
\theoremstyle{definition}
\theoremstyle{remark}
\newtheorem{remark}{Remark}
\DeclareMathOperator{\vect} {vec}
\DeclareMathOperator{\trace} {trace}
\DeclareMathOperator{\diag} {diag}
\newcommand{\vone} {\mathbf{1}}
\newcommand{\tr} {{\mathsf{T}}}
\crefname{section}{Sec.}{Secs.}
\Crefname{section}{Section}{Sections}
\Crefname{table}{Table}{Tables}
\crefname{table}{Tab.}{Tabs.}
\newcommand{\ourmethodsimple}{\textsc{QuMoSeg-v2}\xspace} 
\newcommand{\ourmethod}{\textsc{QuMoSeg-v1}\xspace}
\title{Quantum Motion Segmentation}
\author{Federica Arrigoni$^1$, Willi Menapace$^1$, Marcel Seelbach Benkner$^2$, \\ Elisa Ricci$^{1,3}$ and Vladislav Golyanik$^4$ \\ 
\quad  \\ 
\small $^1$ University of Trento \quad $^2$ University of Siegen \quad $^3$ Bruno Kessler Foundation \\ 
\small  $^4$  Max Planck Institute for Informatics, SIC}
\date{ }
\begin{document}

\maketitle

\begin{abstract}
Motion segmentation is a challenging problem that seeks to identify independent motions in two or several input images. This paper introduces the first algorithm for motion segmentation that relies on adiabatic quantum optimization of the objective function. The proposed method achieves {on-par performance with the state of the art} on problem instances which can be mapped to modern quantum annealers. 
\end{abstract}

\begin{figure}[b!] 
  \centering
  \begin{tabular}{ccccc} 
\small Ground truth & Our method & \textsc{Mode} \cite{ArrigoniPajdla19a} & \textsc{Synch} \cite{ArrigoniPajdla19b} & Xu et al. \cite{XuCheongAl19} \\
\includegraphics[width=0.175\linewidth]{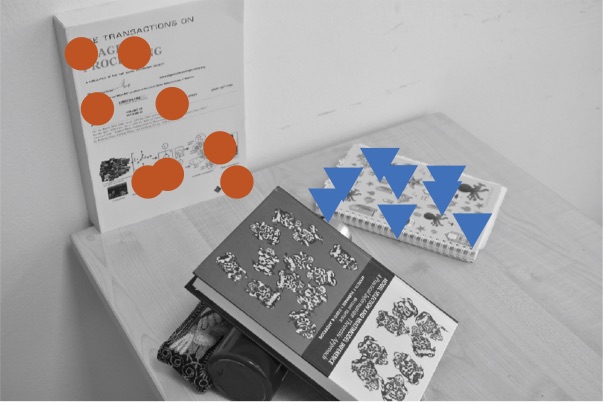} & \includegraphics[width=0.175\linewidth]{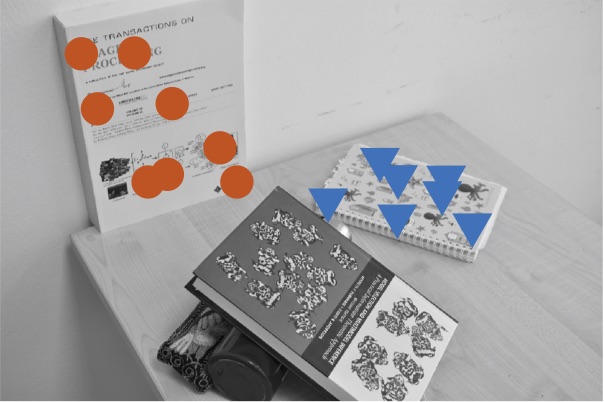} & \includegraphics[width=0.175\linewidth]{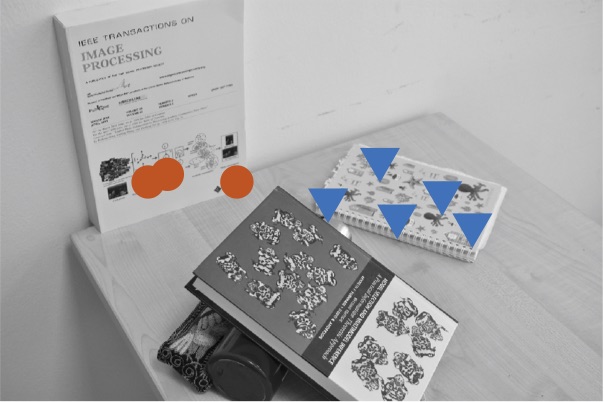} & \includegraphics[width=0.175\linewidth]{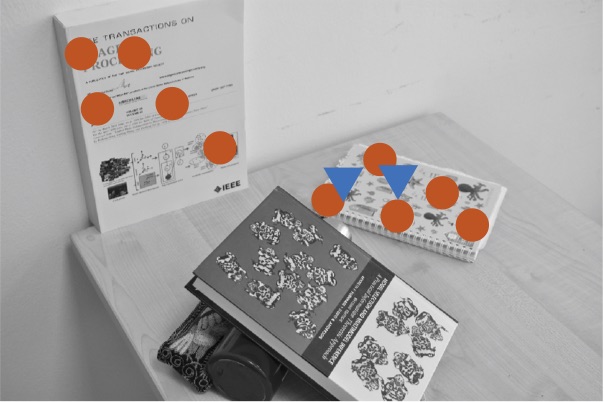} &
\includegraphics[width=0.175\linewidth]{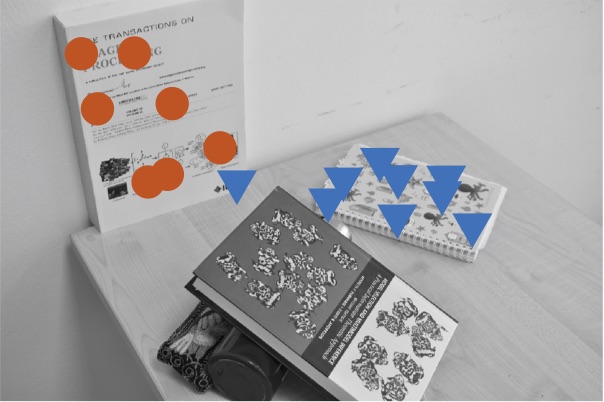}\\
\includegraphics[width=0.175\linewidth]{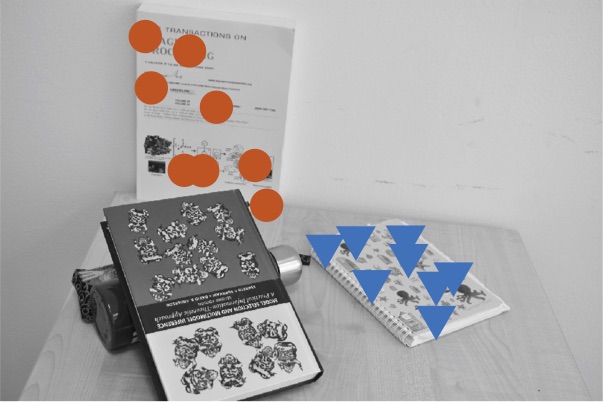} & \includegraphics[width=0.175\linewidth]{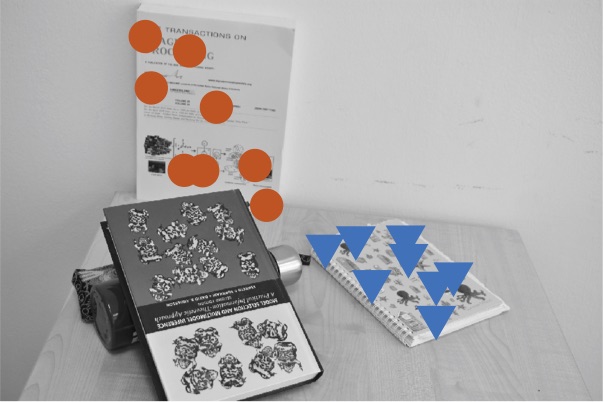} & \includegraphics[width=0.175\linewidth]{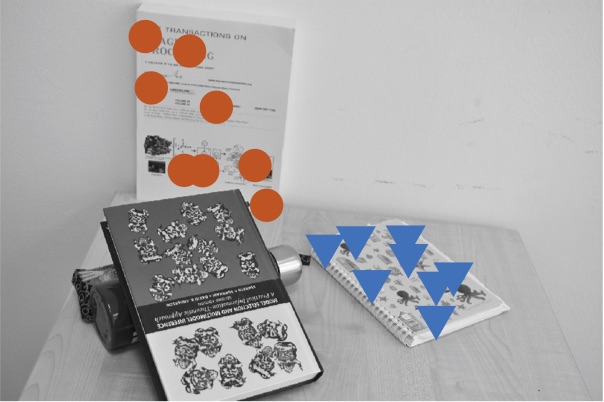} & \includegraphics[width=0.175\linewidth]{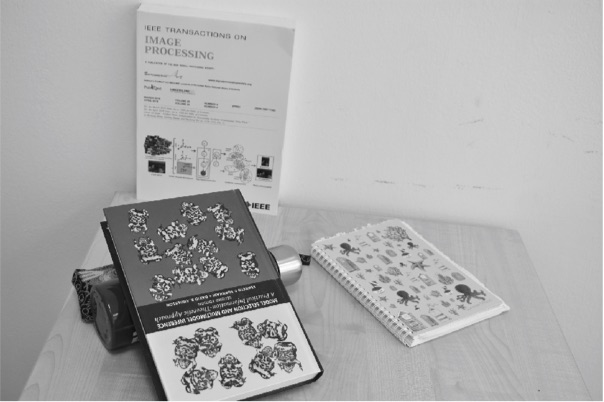}  & \includegraphics[width=0.175\linewidth]{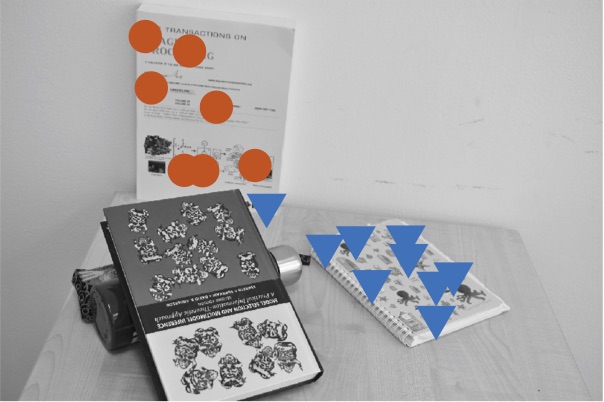} \\
\includegraphics[width=0.175\linewidth]{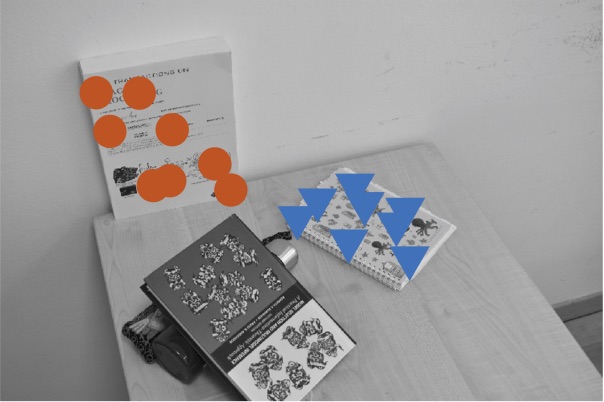} & \includegraphics[width=0.175\linewidth]{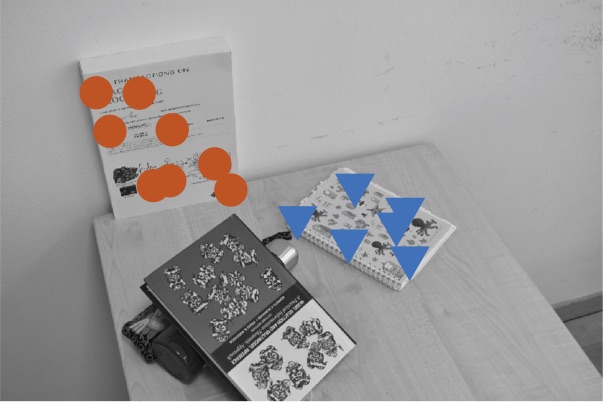} & \includegraphics[width=0.175\linewidth]{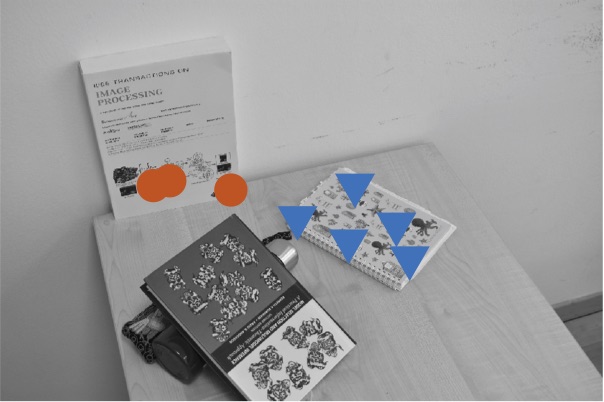} & \includegraphics[width=0.175\linewidth]{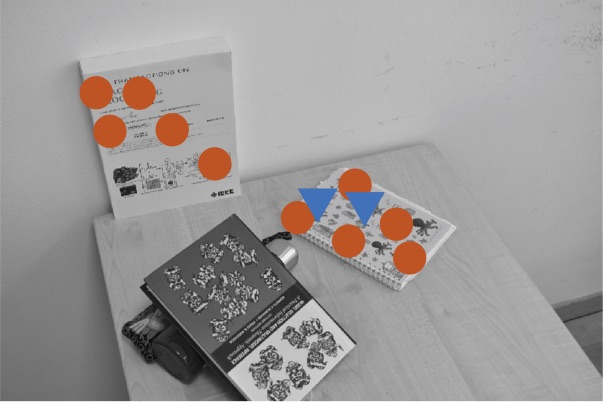} & \includegraphics[width=0.175\linewidth]{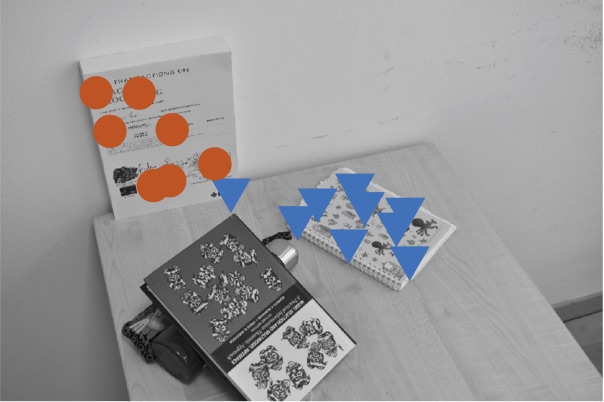}  \\
\end{tabular}
\caption{
Qualitative results on sample images from the new Q-MSEG dataset,  where each color (symbol) represents a distinct planar motion.
On average, the accuracies of our \ourmethod, \textsc{Mode} \cite{ArrigoniPajdla19a}, \textsc{Synch} \cite{ArrigoniPajdla19b} and Xu et al. \cite{XuCheongAl19} are $0.97$, $0.93$, $0.93$ and $0.89$, respectively, on problems with 96 qubits. 
In the shown example, our approach outperforms the competitors. See Tab.~\ref{tab_real} for further details. 
}
\label{fig:qualitative}
\end{figure}

\section{Introduction}\label{sec:introduction} 

Quantum computer vision is an emerging field. 
Recently, several classical problems were reformulated to enable quantum optimization, including recognition \cite{OMalley2018,Cavallaro2020} and matching tasks \cite{seelbach20quantum,BirdalGolyanikAl21}.
Promising results were shown so far, thus encouraging further research. 
Among the two existing paradigms for quantum computing, \textit{i.e.,} gate-based and adiabatic quantum computing (AQC), experimental realizations of AQC are already applicable to real-world problems, provided that the objective is given as a \emph{quadratic unconstrained binary optimization} (QUBO) problem. 
Thus, quantum annealing (QA)---which refers to not perfectly adiabatic implementations of AQC \cite{DWaveWebsite,grant2020adiabatic}---is an  experimental and promising technology for finding  solutions to combinatorial problems leveraging quantum mechanics \cite{Farhi2001,Denchev2016}. 
QA optimises objectives \emph{without relaxation} and obtains \emph{globally-optimal or low-energy solutions} with high probabilities.  Note that these important properties are hardly present in traditional methods, hence it is crucial to identify problems benefiting from this new class of machines.

In \cite{BirdalGolyanikAl21}
an AQC algorithm for \emph{permutation synchronization} is proposed, which finds cycle-consistent matches across a set of images or shapes, where the matches are given as permutation matrices. 
The recent survey \cite{ArrigoniFusiello19} discusses many synchronization problems already studied in the literature (\textit{e.g.,} rotation synchronization for structure from motion \cite{ChatterjeeGovindu13} or pose synchronization for point-set registration \cite{GovinduPooja14}). 
Notwithstanding, permutation synchronization  \cite{BirdalGolyanikAl21} is the only one that has been solved via quantum optimization so far. 

This paper advances the state of the art in quantum computer vision by bringing a new synchronization problem, \textit{i.e.,} \emph{motion segmentation}, into an AQC-admissible form; see  Fig.~\ref{fig:qualitative} for exemplary results. 
The task of motion segmentation \cite{SaputraMarkhamAl18} is to classify  points in multiple images into different motions, which is relevant in such applications as dynamic 3D reconstruction \cite{OzdenSchindlerAl10} or autonomous driving \cite{SabzevariScaramuzza16}. 
Observe that quantum formulations do not make sense for all problems, but for those, \textit{e.g.,} that include \emph{combinatorial} optimisation objectives, which are usually $\mathcal {NP}$-hard. 
Motion segmentation is identified to have a combinatorial structure, and, hence, is a meaningful candidate to leverage the advantages of the quantum processor. Bringing motion segmentation into an AQC is challenging as only problems in a QUBO form are admitted. Thus, we primarily focus on how to formulate motion segmentation as a QUBO.

Our work adopts the synchronization formulation of motion segmentation from \cite{ArrigoniPajdla19b}, which we carefully convert to a QUBO problem. 
This gives rise to the first variant of our quantum approach, named \ourmethod (from ``\textbf{QU}antum \textbf{MO}tion \textbf{SEG}mentation''): it works well in many practical scenarios but it can not manage large-scale problems since it is based on a \emph{dense} matrix.
For this reason, we also develop an alternative method based on a \emph{sparse} matrix which can solve larger problems, resulting in \ourmethodsimple: its derivation, however, requires additional assumptions, \textit{i.e.,} the knowledge of the number of points per motion.
In summary, our primary contributions are: 

\begin{itemize}\itemsep0em 
\item[1)] A new approach to motion segmentation that employs AQC (Sec.~\ref{sec:approach}); 
\item[2)] A new real dataset (Q-MSEG) for motion segmentation (Sec.~\ref{sec:experiments}). 
\end{itemize} 

We evaluated our approach on synthetic data, a new real dataset (Q-MSEG) and small problems sampled from the Hopkins benchmark \cite{TronVidal07}.
In our extensive experiments, our approach achieves competitive accuracy (close to or higher than competing methods) on problem instances which are mappable to the AQC of the latest generation, and demonstrates its high robustness to noise.
Due to the limits of current quantum hardware (that improves constantly and is far from maturity), our experiments are limited to small-scale data, as done also in previous work \cite{BirdalGolyanikAl21}. However, it is expected that progress in quantum hardware alongside with the ability to solve combinatorial problems without approximation will give practical advantages for large-scale problems in the future.

Our derivations share only a few  similarities with Birdal \textit{et al.}~\cite{BirdalGolyanikAl21}: In fact, bringing motion segmentation into a QUBO form requires more analytical steps compared to previous work on quantum synchronization \cite{BirdalGolyanikAl21} because binary matrices are less constrained than permutation matrices.
Moreover, our formulation for synchronization requires linearly-many variables in the number of input points, hence we are can handle more points compared to \cite{BirdalGolyanikAl21}. 
Our source code and data will be made publicly available.

\section{Background}\label{sec_background} 

Our work is inspired by Arrigoni and Pajdla \cite{ArrigoniPajdla19b} (reviewed in Sec.~\ref{sec:motion_segmentation}), where a convenient matrix representation is introduced for motion segmentation from pairwise correspondences. 
In Sec.~\ref{sec:approach} we will show how to rewrite such framework in terms of a QUBO, in order to enable adiabatic quantum optimization. In this respect, we report some preliminary notions on quantum computing in Sec.~\ref{sec:quantum_annealing}.

\begin{figure}[t!] 
  \centering
\includegraphics[width=0.495\linewidth]{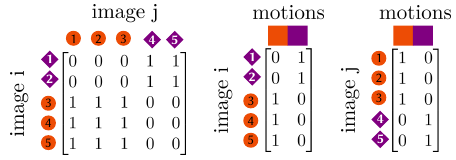} 
\includegraphics[width=0.495\linewidth]{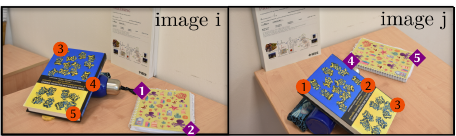} 
\caption{Matrix representation of motion segmentation. }
\label{fig:ij_motions}
\end{figure}

\subsection{Motion Segmentation}\label{sec:motion_segmentation} 

The objective of the motion segmentation problem is to group key-points in multiple images according to a number of motions. 
We use the following notation: $n$ is the number of images; $p_i$ is the number of key-points in image $i$; $p=\sum_{i=1}^n p_i$ is the total amount of key-points; $d$ is the number of motions (known by assumption).
We focus here on motion segmentation from pairwise correspondences \cite{ArrigoniPajdla19a,ArrigoniPajdla19b}, which can be addressed in two steps: 1) motion segmentation is addressed on different \emph{pairs} of images independently, which in turn can be done via multi-model fitting \cite{MagriFusiello14,MagriFusiello15,BarathMatas18}; 2) the results derived in the first step are globally combined, thus producing the required \emph{multi-frame} segmentation.

As shown in \cite{ArrigoniPajdla19b}, motion segmentation can be seen as a ``synchronization'' of binary matrices. Indeed, the result of motion segmentation in two images $i$ and $j$ can be represented as a matrix $Z_{ij} \in \{ 0,1\}^{ p_i \times p_j}$ as follows:
\begin{itemize}\itemsep0em 
\item  $[Z_{ij}]_{h,k}=1$ if point $h$ in image $i$ and point $k$ in image $j$ belong to the same motion;
    \item  $[Z_{ij}]_{h,k}=0$ otherwise.
\end{itemize}
The (known) binary matrix $Z_{ij}$ is referred to as the \emph{partial segmentation} or \emph{relative segmentation} of the pair $(i,j)$. 
It is a \emph{local} representation of segmentation, since it reveals which points in two different images belong to the same motion, but it does not reveal which motion it is with respect to other pairs. Similarly, our desired output can be represented as a matrix $X_i \in \{ 0,1\}^{ p_i \times d}$ as follows:
\begin{itemize}
    \item  $[X_i]_{h,k}=1$ if point $h$ in image $i$ belongs to motion $k$;
    \item  $[X_i]_{h,k}=0$ otherwise.
\end{itemize}
The binary matrix $X_i$ is called the \emph{total segmentation} or \emph{absolute segmentation} of image $i$. Observe that the number of rows is equal to the number of points while the number of columns is equal to the number of motions. Note also that it is a \emph{global} representation of segmentation since it reveals the membership of all points with respect to an absolute order of motions.
The notions of absolute and relative segmentations are illustrated in Fig.~\ref{fig:ij_motions}.

\begin{remark}
Let $\mathbf{m}_i$ be a vector of length $d$ such that $[\mathbf{m}_i]_h$ counts how many points in image $i$ belongs to motion $h$. Then the columns in $X_i$ sum to $\mathbf{m}_i$: 
\begin{equation}
       \vone_{p_i}^\tr X_i  = \mathbf{m}_i^\tr,
      \label{eq_known_di}
\end{equation}
where $\vone$ is a vector of ones (with length given as subscript).
Note also that the product $X_i^\tr X_i$ is a $d \times d$ diagonal matrix:
\begin{equation}
    X_i^\tr X_i = \diag(\mathbf{m}_i).
    \label{eq_diag_di}
\end{equation}
These simple properties will be exploited later.
\label{rem_diagonal}
\end{remark}

The connection between relative and absolute segmentations \cite{ArrigoniPajdla19b} is given by:
\begin{equation}
    Z_{ij} = X_i X_j^{\mathsf{T}}.
    \label{eq_consistency}
\end{equation}
Recall that the left side in the above equation is known whereas the right side is unknown. 
In general, there are multiple image pairs giving rise to an equation of the form \eqref{eq_consistency}, which can be conveniently represented as the edge set $\mathcal{E}$ of a graph $\mathcal{G} = (\mathcal{V}, \mathcal{E})$, where $\mathcal{V}=\{1, \dots, n\}$ denotes the vertex set. In other terms, each vertex represents an image and an edge is present between two vertices if and only if the relative segmentation of that image pair is available.
Thus motion segmentation can be cast to the task of recovering $X_1, \dots, X_n$ starting from $Z_{ij}$ with $(i,j) \in \mathcal{E}$, such that \eqref{eq_consistency} is satisfied. This is also called \emph{synchronization} \cite{ArrigoniFusiello19}.

\subsection{Adiabatic Quantum Optimization}
\label{sec:quantum_annealing} 

Modern AQC can solve \emph{quadratic unconstrained binary optimization} (QUBO) problems of the form 
\begin{equation}
\min_{\mathbf{y} \in \mathcal{B}^k } \mathbf{y}^\mathsf{T} Q \mathbf{y} + \mathbf{s}^\tr \mathbf{y}, 
    \label{eq_qubo}
\end{equation}
where $\mathcal{B}^k$ denotes the set of binary vectors of length $k$, $Q \in \mathbb{R}^{k \times k}$ is a real symmetric matrix and $\mathbf{s} \in  \mathbb{R}^k$. 
Note that optimization is performed over binary variables. 
QUBO problems are $\mathcal{NP}$-hard. 
An optimization problem with hard constraints of the form $ A_i \mathbf{y}= \mathbf{b_i}$ can be converted to a QUBO with soft constraints \cite{BirdalGolyanikAl21}, in which linear terms weighted by multipliers $\lambda_i$ rectify $Q$ and $\mathbf{s}$: 
\begin{equation}
\min_{\mathbf{y} \in \mathcal{B}^k } \mathbf{y}^\mathsf{T} Q \mathbf{y} + \mathbf{s}^\tr \mathbf{y} + \sum_i \lambda_i\,|| A_i \mathbf{y} - \mathbf{b_i} ||^2 . 
\label{eq_qubo_linear_lambda}
\end{equation}
As \eqref{eq_qubo} does not easily allow including high-level constraints (\textit{e.g.,} on a matrix rank), most computer vision problems cannot be easily posed in a QUBO form. 
Latest research thus focuses on finding such  formulations \cite{SeelbachBenkner2021,BirdalGolyanikAl21}. 

AQC interprets $\mathbf{y}$ as a measurement result of $k$ qubits, and optimization of \eqref{eq_qubo} is performed on AQC not in the binary vector space but a ``lifted'', $2^{k}$-dimensional space of $k$ qubits, taking advantage of quantum-mechanical effects like superposition and entanglement. 
In contrast to a classical bit which can be either in  state $0$ or $1$ at a time, a qubit $\ket{q} = \alpha \ket{0} + \beta  \ket{1}$ can take any state fulfilling 
$\alpha, \beta \in \mathbb{C}$ and  $|\alpha|^2 + |\beta|^2 = 1$. 
Once a QUBO form is known, it is first passed to a \emph{minor embedding} algorithm such as Cai \textit{et al.}~\cite{Cai2014}. 
Its purpose it to find a mapping of a QUBO problem \eqref{eq_qubo} defined in terms of qubits---which in the following we call \textit{logical} (\textit{i.e.,} mathematical models)---to an AQC with \textit{physical} qubits (\textit{i.e.,} hardware realizations of the mathematical models). 
This step is necessary for most problems except the smallest ones, as the qubit connectivity pattern encoded in $Q$ is not natively supported by the hardware \cite{Dattani2019} 
and several repeated physical qubits, building a  \textit{chain}, are required to represent a single logical qubit during quantum annealing.

After the initialisation in a problem-independent state, AQC is transitioning from the initial solution to the solution of the target problem in the QUBO form, \textit{i.e.,} one says that the system evolves its state (or an \textit{annealing} is taking place) according to the rules of quantum mechanics \cite{Farhi2001,McGeoch2014}. 
The notion \textit{adiabatic} refers to how this transition  happens in the ideal case, namely obeying the adiabatic theorem of quantum mechanics \cite{BornFock1928}. 
The remaining steps of an AQC algorithm are: 1) Sampling; 2)  Unembedding; 3) Bitstring selection and 4) Solution  interpretation \cite{SeelbachBenkner2021}. 
QA is probabilistic in nature, and a globally-optimal measurement can be obtained with specific success probabilities. 
Thus, multiple annealings are required to reach a satisfactory result (QUBO \textit{sampling}). 
The number of repetitions can vary by orders of magnitude depending on the probability to measure an optimal solution, the problem size and the minor embedding. 
Each sample is measured and unembedded, \textit{i.e.,} returned in terms of the logical qubit measurements. 
Next, one or several samples are chosen as the final solution(s), and the most common  criterion is the lowest energy (\textit{i.e.,} minimal cost) over all samples. 
An interested reader can further refer to McGeoch \cite{McGeoch2014}.

\section{Our Quantum Approach}\label{sec:approach} 

At the core of our approach is a QUBO formulation of motion segmentation. This permits---for the first time in the literature---to solve the segmentation task via quantum optimization. 
Bringing it into a QUBO form is not straightforward and requires more analytical steps compared to previous work on quantum synchronization \cite{BirdalGolyanikAl21}, as will be clarified later.
We propose two methods: the first one is based on a dense matrix, and, hence, can require an increased embedding size (Sec.~\ref{sec_dense_matrix}); the second one, instead, is based on a sparse matrix but relies on  additional assumptions (Sec.~\ref{sec_simplified}).
Both variants are principled (\textit{i.e.,} are equivalent to synchronization) and can solve real-world problems (see Sec.~\ref{sec:experiments}).

\subsection{QuMoSeg-v1}
\label{sec_dense_matrix}

As explained in Sec.~\ref{sec:motion_segmentation}, motion segmentation can be posed as computing absolute segmentations (\textit{i.e.,} $X_1, \dots, X_n$) starting from pairwise segmentations $Z_{ij}$ with $(i,j) \in \mathcal{E}$ such that $Z_{ij}=X_i X_j^\tr $. 
In the presence of noise, the task is to solve the following optimization problem:
\begin{equation}
\begin{gathered}
\min_{X_1, \dots X_n} \sum_{(i,j) \in \mathcal{E}}   || Z_{ij} -  X_i X_j^{\mathsf{T}} ||_F^2, \\ \text{s.t.} \ \vect(X_i) \in \mathcal{B}^{p_i}, \quad X_i \vone_{d} = \vone_{pi} \quad \forall i=1, \dots, n,  
\end{gathered}
\label{cost_synch}
\end{equation}
where $\mathcal{B}^k$ denotes the set of binary vectors of length $k$, and $\vect(\cdot)$ denotes the vectorization operator that transforms a matrix into a vector by stacking the columns one under the other. Recall that $X_i$ has size $p_i \times d$, so $\vect(X_i)$ has length $dp_i$ and should be a binary vector.
The constraint  $ X_i \vone_{d} = \vone_{pi} $ means that each row sums to 1. Indeed, each row in $X_i$ has exactly one entry equal to 1 (whereas all other entries are zero), meaning that each point should belong to exactly one motion. 
The cost in \eqref{cost_synch} measures, for each image pair, the discrepancy (in the Frobenius norm sense) between the input relative segmentation (\textit{i.e.,} $Z_{ij}$) and the relative segmentation derived from the sought absolute segmentations (\textit{i.e.,} $X_i X_j^{\mathsf{T}}$). It is also known as the \emph{consistency error} \cite{ArrigoniFusiello19}. 

\begin{proposition}
Problem \eqref{cost_synch} is equivalent to
\begin{equation}
\begin{gathered}
\max_{X_1, \dots X_n} \sum_{(i,j) \in \mathcal{E}}  \trace( X_i^\tr (2Z_{ij} - \vone_{p_i \times p_j}) X_j ) \\ 
\text{s.t.} \ \vect(X_i) \in \mathcal{B}^{p_i}, \quad X_i \vone_{d} = \vone_{pi} \quad \forall i=1, \dots, n
\end{gathered}
\label{cost_trace_NEW}
\end{equation}
where $\vone_{p_i \times p_j} $ denotes a $p_i \times p_j$ matrix of ones. 
\label{prop_synch_trace_NEW}
\end{proposition}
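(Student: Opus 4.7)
The plan is to expand the squared Frobenius norm in \eqref{cost_synch}, drop terms that are constant, and rewrite the two remaining pieces as a single trace. Since the constraints on the $X_i$'s are identical in both problems, it suffices to show that the objectives agree up to an additive constant (and a sign).

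First I would expand
\begin{equation*}
\| Z_{ij} - X_i X_j^\tr \|_F^2 = \trace(Z_{ij}^\tr Z_{ij}) - 2\,\trace(Z_{ij}^\tr X_i X_j^\tr) + \trace(X_j X_i^\tr X_i X_j^\tr).
\end{equation*}
The first term does not involve the unknowns and is discarded. The cross term rearranges by the cyclic property of the trace to $\trace(X_i^\tr Z_{ij} X_j)$, which already looks like half of the target bilinear form.

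The main obstacle is the quadratic term $\| X_i X_j^\tr \|_F^2$, which is not obviously expressible through the constant matrix $\vone_{p_i \times p_j}$. This is where the row-sum constraint $X_i \vone_d = \vone_{p_i}$ together with the binary condition is essential, via Remark~\ref{rem_diagonal}. Concretely, I would write
\begin{equation*}
\trace(X_j X_i^\tr X_i X_j^\tr) = \trace\bigl( (X_i^\tr X_i)(X_j^\tr X_j) \bigr) = \trace\bigl( \diag(\mathbf{m}_i)\,\diag(\mathbf{m}_j) \bigr) = \mathbf{m}_i^\tr \mathbf{m}_j,
\end{equation*}
using \eqref{eq_diag_di}. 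Then, using \eqref{eq_known_di} and the factorization $\vone_{p_i \times p_j} = \vone_{p_i} \vone_{p_j}^\tr$, the same quantity equals
\begin{equation*}
(\vone_{p_i}^\tr X_i)(X_j^\tr \vone_{p_j}) = \trace\bigl( X_i^\tr \vone_{p_i} \vone_{p_j}^\tr X_j \bigr) = \trace\bigl( X_i^\tr \vone_{p_i \times p_j} X_j \bigr).
\end{equation*}
(Equivalently, one could observe that the entries of $X_i X_j^\tr$ lie in $\{0,1\}$, so the sum of squared entries equals the sum of entries, i.e.\ $\vone_{p_i}^\tr X_i X_j^\tr \vone_{p_j}$.)

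Finally I would collect the two simplified terms to obtain, up to the additive constant $\sum_{(i,j)\in\mathcal{E}}\|Z_{ij}\|_F^2$,
\begin{equation*}
\sum_{(i,j)\in\mathcal{E}} \| Z_{ij} - X_i X_j^\tr \|_F^2 \;\equiv\; -\sum_{(i,j)\in\mathcal{E}} \trace\bigl( X_i^\tr (2 Z_{ij} - \vone_{p_i \times p_j}) X_j \bigr),
\end{equation*}
so minimizing the left-hand side under the stated constraints is equivalent to maximizing the trace sum on the right, which is precisely \eqref{cost_trace_NEW}. I would close by remarking that all manipulations respect the feasible set, so the equivalence holds over the same set of admissible $(X_1,\dots,X_n)$.
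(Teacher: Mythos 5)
Your proposal is correct and follows essentially the same route as the paper's proof: expand the Frobenius norm, discard the constant $\trace(Z_{ij}^\tr Z_{ij})$ term, reduce the quartic term to $\mathbf{m}_i^\tr\mathbf{m}_j$ via \eqref{eq_diag_di}, and re-express that quantity as $\trace(X_i^\tr \vone_{p_i\times p_j} X_j)$ via \eqref{eq_known_di} so it can be absorbed into the bilinear form $2Z_{ij}-\vone_{p_i\times p_j}$. The parenthetical observation that the entries of $X_iX_j^\tr$ are binary (so the sum of squares equals the sum of entries) is a pleasant alternative justification of the same identity, but does not change the substance of the argument.
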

\begin{proof}
For simplicity of exposition, we drop constraints and focus on the cost function itself. By computation and exploiting \eqref{eq_diag_di}, we obtain: 
\begin{equation}
\begin{gathered}
 || Z_{ij} -  X_i X_j^{\mathsf{T}} ||_F^2 =  
  \trace(Z_{ij}^{\mathsf{T}}Z_{ij}) + \trace(X_jX_i^\tr X_i X_j^\tr) -2 \trace(Z_{ij}^\tr X_i X_j^\tr) = \\
  \trace(Z_{ij}^{\mathsf{T}}Z_{ij})   
  + \trace(\diag(\mathbf{m}_i) \diag(\mathbf{m}_j) )
  -2    \trace( X_i^\tr Z_{ij} X_j )  = \\
  =\trace(Z_{ij}^{\mathsf{T}}Z_{ij}) + 
  \mathbf{m}_i^\tr \mathbf{m}_j 
  -2    \trace( X_i^\tr Z_{ij} X_j ) .
\end{gathered}
\label{eq_synch_trace_NEW} 
\end{equation}
Note that the first term is constant, for it depends on the input $Z_{ij}$ only, hence it can be ignored in the optimization. 
As for the second term, using \eqref{eq_known_di}, we get: 
\begin{equation}
\begin{gathered}
 \mathbf{m}_i^\tr \mathbf{m}_j  = \trace( \mathbf{m}_i  \mathbf{m}_j^\tr) = \trace(X_i^\tr \vone_{p_i} \vone_{p_j}^\tr X_j )    
 = \trace (X_i^\tr  \vone_{p_i \times p_j} X_j). 
 \end{gathered}
\end{equation}
Hence, the optimization in \eqref{cost_synch} is equivalent to \eqref{cost_trace_NEW}. 

\end{proof}

We now rewrite \eqref{cost_trace_NEW} in a compact form via the following notation, where all the measures/unknowns are grouped into block-matrices $X \in \mathbb{R}^{p{\times}d}$ and $Z\in \mathbb{R}^{p{\times}p}$:
\begin{equation}
X=
\begin{bmatrix}
X_{1} \\
X_{2} \\
\dots \\
X_{n}
\end{bmatrix}, \quad
Z = 
\begin{bmatrix}
0 & Z_{12} & \dots & Z_{1n} \\
Z_{21} & 0 & \dots & Z_{2n} \\
\dots &  &  & \dots \\
Z_{n1} & Z_{n2} & \dots &  0
\end{bmatrix}.
\label{eq_X_Z}
\end{equation}

\begin{proposition}
Problem \eqref{cost_synch} is equivalent to
\begin{equation}
\begin{gathered}
\max_{X} \vect(X)^\tr ( I_{d \times d}\otimes(2Z - \vone_{p \times p})   ) \vect(X), \\ \text{s.t.} \ \vect(X) \in \mathcal{B}^{dp}, \quad X \vone_{d} = \vone_{p}.
\end{gathered}
\label{eq_cost_matrixform}
\end{equation}
\label{prop_dense_matrix}
\end{proposition}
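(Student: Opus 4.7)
The plan is to absorb the edge-indexed sum in \eqref{cost_trace_NEW} into a single quadratic form in $\vect(X)$, exploiting the block structure of $X$ and $Z$ introduced in \eqref{eq_X_Z}. The pivotal algebraic tool is the standard Kronecker/vectorization identity
\begin{equation*}
\trace(X^\tr M X) \;=\; \vect(X)^\tr (I_d \otimes M) \vect(X),
\end{equation*}
which itself follows from $\vect(MX) = (I_d \otimes M)\vect(X)$ together with $\vect(A)^\tr \vect(B) = \trace(A^\tr B)$. Applying this identity with $M = 2Z - \vone_{p \times p}$ immediately reproduces the right-hand side of \eqref{eq_cost_matrixform}.

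To match it back against \eqref{cost_trace_NEW}, I would expand $X^\tr M X$ blockwise. Since $X$ is a block column with blocks $X_i$ and $M$ has $(i,j)$-block equal to $2Z_{ij} - \vone_{p_i \times p_j}$ (with $Z_{ii}=0$ by construction in \eqref{eq_X_Z}), linearity of trace yields
\begin{equation*}
\trace(X^\tr M X) \;=\; \sum_{i,j} \trace\bigl(X_i^\tr (2Z_{ij} - \vone_{p_i \times p_j}) X_j\bigr).
\end{equation*}
The off-diagonal $(i,j)$ and $(j,i)$ terms coincide via the block-symmetry $Z_{ji}=Z_{ij}^\tr$, so the double sum reproduces the edge-indexed sum of \eqref{cost_trace_NEW} with the correct multiplicity under the natural interpretation of $\mathcal{E}$. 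The diagonal contributions evaluate, via \eqref{eq_known_di}, to $-\trace(X_i^\tr \vone_{p_i \times p_i} X_i) = -\mathbf{m}_i^\tr \mathbf{m}_i$, which act as boundary terms once the row-sum constraint is imposed.

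For the constraints, the transfer is essentially cosmetic: vectorization preserves binarity, so $\vect(X) \in \mathcal{B}^{dp}$ is equivalent to $\vect(X_i) \in \mathcal{B}^{d p_i}$ for every $i$, and the global row-sum condition $X \vone_d = \vone_p$ splits as $X_i \vone_d = \vone_{p_i}$ by the block-column structure of $X$.

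The main obstacle I anticipate is precisely the careful bookkeeping in the block expansion: one must count ordered versus unordered pairs in $\mathcal{E}$ correctly, absorb the diagonal-block contributions $-\mathbf{m}_i^\tr \mathbf{m}_i$ (using Remark~\ref{rem_diagonal} together with the row-sum constraint so they become harmless additive terms), and verify that the remaining quantity matches the cost in \eqref{cost_trace_NEW} up to an additive constant that does not alter the argmax. Once this accounting is clean, Proposition~\ref{prop_synch_trace_NEW} closes the chain of equivalences back to \eqref{cost_synch}, and the Kronecker identity finishes the proof.
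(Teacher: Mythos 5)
Your route is the same as the paper's: define $W = 2Z - \vone_{p\times p}$, identify the edge-indexed sum of \eqref{cost_trace_NEW} with $\trace(X^\tr W X)$, and convert via $\trace(X^\tr W X) = \vect(X)^\tr(I_{d\times d}\otimes W)\vect(X)$; the constraint transfer is handled identically. You are in fact more explicit than the paper, whose proof asserts $\sum_{(i,j)\in\mathcal{E}}\trace\bigl(X_i^\tr(2Z_{ij}-\vone_{p_i\times p_j})X_j\bigr) = \trace(X^\tr W X)$ in a single line without discussing the diagonal blocks or the edge multiplicity that you correctly isolate.

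The one step that does not hold up is your treatment of the diagonal contributions. Since $Z_{ii}=0$ in \eqref{eq_X_Z}, the $(i,i)$ block of $W$ is $-\vone_{p_i\times p_i}$, so $\trace(X^\tr W X)$ carries the extra summand $-\sum_i \mathbf{m}_i^\tr\mathbf{m}_i$, exactly as you compute. But this is not a ``harmless additive term'': the row-sum constraint $X_i\vone_d=\vone_{p_i}$ fixes only $\vone_d^\tr\mathbf{m}_i=p_i$, while $\mathbf{m}_i^\tr\mathbf{m}_i$ still ranges from $p_i^2/d$ (balanced motions) to $p_i^2$ (all points in one motion) as $X_i$ varies. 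In \ourmethod the vectors $\mathbf{m}_i$ are precisely \emph{not} assumed known (that is the whole point of Remark~\ref{remark_no_equivalent}), so $-\sum_i\mathbf{m}_i^\tr\mathbf{m}_i$ is a genuine, nonconstant bias toward balanced segmentations, and the claimed equality of maximizers with \eqref{cost_trace_NEW} does not follow from your accounting. (A similar remark applies to missing edges: if $\mathcal{G}$ is not complete, the zero blocks of $Z$ contribute spurious terms $-\mathbf{m}_i^\tr\mathbf{m}_j$ for $(i,j)\notin\mathcal{E}$.) To genuinely close the gap one must either zero out the offending blocks of $\vone_{p\times p}$ --- which changes the stated quadratic form --- or argue separately that these terms do not move the argmax. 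Note that the paper's own proof is silent on this and implicitly makes the same leap; you have at least located where the difficulty sits, but the resolution you offer is not a proof.
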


\begin{proof}
Let us define $W=2Z - \vone_{p \times p}$. Using \eqref{eq_X_Z}, we get:
\begin{equation}
\small 
\sum_{(i,j) \in \mathcal{E}}  \trace( X_i^\tr (2Z_{ij} - \vone_{p_i \times p_j}) X_j )  =   \trace( X^\tr W X ).  
\end{equation}
The above equation can be further simplified by exploiting properties of the trace operator\footnote{For any matrices $A$, $B$ of proper dimensions we have: $\trace(A^\tr B) = \vect(A)^\tr \vect(B) $.} and the Kronecker product\footnote{For any matrices $A,B,Y$ of proper dimensions, the Kronecker product \cite{LiuTrenkler08} satisfies: $ \vect(AYB) = (B^\tr \otimes A) \vect(Y) $.\label{footnote_kron}} denoted by $\otimes$, resulting in:
\begin{equation}
\small 
\trace( X^\tr W X ) =  \vect(X)^\tr \vect(WX) =  \vect(X)^\tr (I_{d \times d} \otimes W) \vect(X), 
\end{equation}
where $I_{d \times d}$ denotes the $d \times d$ identity matrix. 
Hence, the objective function   \eqref{eq_cost_matrixform} is the same as 
\eqref{cost_trace_NEW}, which in turn is equivalent to \eqref{cost_synch}, as shown in Prop.~\ref{prop_synch_trace_NEW}.
As for constraints, it is easy to see that
$\vect(X_i) \in \mathcal{B}^{p_i}$ and $ X_i \vone_{d} = \vone_{pi}$ 
translate into $\vect(X) \in \mathcal{B}^{dp} $ and $ X \vone_{d} = \vone_{p}$, when considering all the unknowns simultaneously as stored in the block-matrix $X$. Hence we get the thesis.  
\end{proof}

\begin{corollary}
 Problem 
\eqref{cost_synch} can be mapped into a QUBO problem \eqref{eq_qubo_linear_lambda} 
\begin{equation}
\begin{gathered}
\min_{\mathbf{y} \in \mathcal{B}^k } \mathbf{y}^\mathsf{T} Q \mathbf{y} 
+ \lambda_1 ||  A \mathbf{y} - \mathbf{b} ||^2, \end{gathered}
\label{eq_QUBO_final_dense}
\end{equation}
where 
$ Q = -I_{d \times d}\otimes(2Z - \vone_{p \times p}) $,    $\ \mathbf{y} = \vect(X)$,    $ \ A = (\vone_{d}^\tr \otimes I_{p\times p})$,    $\ \mathbf{b} = \vone_{p}$.
\end{corollary}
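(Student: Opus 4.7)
The plan is to chain together Proposition~\ref{prop_dense_matrix} with the hard-to-soft constraint conversion mechanism described in \eqref{eq_qubo_linear_lambda}, so the only genuine work left is to vectorize the row-sum constraint $X \vone_{d} = \vone_{p}$ into a single linear equation in $\mathbf{y} = \vect(X)$.

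First I would invoke Proposition~\ref{prop_dense_matrix} to replace the synchronization cost \eqref{cost_synch} by the quadratic objective $\vect(X)^\tr (I_{d \times d}\otimes(2Z-\vone_{p\times p}))\vect(X)$ under the binary constraint $\vect(X) \in \mathcal{B}^{dp}$ and the linear constraint $X\vone_d = \vone_p$. Then I would flip the sign to turn the maximization into a minimization, which yields the matrix $Q = -I_{d\times d}\otimes(2Z-\vone_{p\times p})$, matching the statement. Setting $\mathbf{y} = \vect(X)$ and $k = dp$ immediately puts the quadratic part in the required form $\mathbf{y}^\tr Q \mathbf{y}$; there is no linear term $\mathbf{s}^\tr \mathbf{y}$ coming from the cost itself.

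Next I would rewrite the equality constraint in vectorized form. Using the Kronecker identity $\vect(AYB) = (B^\tr \otimes A)\vect(Y)$ from the footnote of Proposition~\ref{prop_dense_matrix}, applied with $A = I_{p\times p}$, $Y = X$, $B = \vone_d$, gives
\begin{equation}
\vect(X \vone_d) = (\vone_d^\tr \otimes I_{p\times p})\,\vect(X).
\end{equation}
Since $X\vone_d$ is already a column vector, $\vect(X\vone_d) = X\vone_d$, so the constraint $X\vone_d = \vone_p$ becomes $A\mathbf{y} = \mathbf{b}$ with $A = \vone_d^\tr \otimes I_{p\times p}$ and $\mathbf{b} = \vone_p$, exactly the quantities claimed.

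Finally, I would appeal to the soft-constraint conversion recipe \eqref{eq_qubo_linear_lambda} from Sec.~\ref{sec:quantum_annealing}: a hard linear constraint $A\mathbf{y} = \mathbf{b}$ is encoded by adding the quadratic penalty $\lambda_1 \|A\mathbf{y}-\mathbf{b}\|^2$ to the cost, with $\lambda_1$ chosen large enough that any feasible optimum of \eqref{cost_synch} is also an optimum of the penalized QUBO. Combining this with the previous two steps produces precisely \eqref{eq_QUBO_final_dense}. The only nontrivial point, and hence the main thing I would be careful about, is the Kronecker manipulation of the row-sum constraint; the rest is bookkeeping (sign flip, renaming of variables, and invoking the already-justified soft-penalty framework).
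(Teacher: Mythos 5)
Your proposal is correct and follows essentially the same route as the paper: invoke Proposition~\ref{prop_dense_matrix}, negate the objective to turn the maximization into a minimization (yielding $Q$), vectorize the row-sum constraint via the Kronecker identity to get $A\mathbf{y}=\mathbf{b}$, and absorb it as a soft penalty per \eqref{eq_qubo_linear_lambda}. The only difference is that you spell out the Kronecker step explicitly, which the paper leaves as ``immediate.''
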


\begin{proof}
Problem \eqref{eq_cost_matrixform} is the maximization of a quadratic cost function with binary variables and linear constraints, hence definitions of $Q$ and $\mathbf{y}$ are immediate. The linear constraints $ X \vone_{d} = \vone_{p} $  can be easily mapped into the canonical form $A \mathbf{y} = \mathbf{b} $ via properties of Kronecker product\footref{footnote_kron} and vectorization.  
\end{proof}

\begin{remark}
$Q=-I_{d \times d}\otimes(2Z - \vone_{p \times p})$ is symmetric and its size is $dp \times dp$, where $d$ is the number of motions and $p$ is the total amount of points over all images. Note also that the size of the optimization variable $\mathbf{y}$ is $dp$, so it scales linearly with the number of points (assuming $d \ll p$, which is usually the case in practice). 
\label{remark_nopoints}
\end{remark}

To summarize, the synchronization formulation for motion segmentation can be cast to a QUBO, thus enabling adiabatic quantum optimization. This gives rise to the first variant of our approach, which is called \ourmethod.

\subsection{QuMoSeg-v2}
\label{sec_simplified}

Note that the block-matrix $Z$ storing all partial segmentations is \emph{sparse}, \textit{i.e.,} most of its entries are zero (see Fig.~\ref{fig:ij_motions} for an example of a partial segmentation). 
However, the matrix $ 2Z - \vone_{p \times p}$ (which appears in the definition of $Q$ in \eqref{eq_QUBO_final_dense}) is \emph{dense} (it has only -1 or +1 as possible entries). 
This may result in increased embedding size, which is undesirable in practice. 
This observation motivates the need for an alternative method based on a sparse matrix, which is explored here. This comes at the price of having additional assumptions, as shown below.

\begin{proposition}
Let us assume that the amount of points per motion is known in each image, namely $\mathbf{m}_i$ is known $\forall i=1, \dots, n$, where the $d$-length vector $\mathbf{m}_i$ is defined in Remark \ref{rem_diagonal}. 
Then, Problem \eqref{cost_synch} is equivalent to
\begin{equation}
\begin{gathered}
\max_{X_1, \dots X_n} \sum_{(i,j) \in \mathcal{E}}  \trace( X_i^\tr Z_{ij} X_j ), \\ 
\text{s.t.} \ \vect(X_i) \in \mathcal{B}^{p_i}, \quad X_i \vone_{d} = \vone_{pi}, \quad  \vone_{p_i}^\tr X_i  = \mathbf{m}_i^\tr  \quad \forall i=1, \dots, n. 
\end{gathered}
\label{cost_trace}
\end{equation}
\label{prop_synch_trace}
\end{proposition}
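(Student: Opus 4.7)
My plan is to reuse the algebraic expansion already carried out in the proof of Proposition~\ref{prop_synch_trace_NEW}. Recall that that proof established, for every edge $(i,j)\in\mathcal{E}$, the identity
\begin{equation*}
|| Z_{ij} - X_i X_j^\tr ||_F^2 = \trace(Z_{ij}^\tr Z_{ij}) + \mathbf{m}_i^\tr \mathbf{m}_j - 2\,\trace(X_i^\tr Z_{ij} X_j),
\end{equation*}
where $\mathbf{m}_i$ is the per-motion point-count vector attached to $X_i$ via \eqref{eq_known_di}. My first step is to read off which of these three terms are constant during the optimization. The term $\trace(Z_{ij}^\tr Z_{ij})$ depends only on the input $Z_{ij}$ and can therefore be dropped, exactly as in Prop.~\ref{prop_synch_trace_NEW}.

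The second step is the crux of the argument and is where the new assumption enters: the added hard constraint $\vone_{p_i}^\tr X_i = \mathbf{m}_i^\tr$ in \eqref{cost_trace} fixes $\mathbf{m}_i$ for every $i$ to the known value. Under this restriction, $\mathbf{m}_i^\tr \mathbf{m}_j$ becomes a constant for every edge and can also be discarded from the objective. What remains is precisely $-2\sum_{(i,j)\in\mathcal{E}} \trace(X_i^\tr Z_{ij} X_j)$; minimizing this quantity is equivalent to maximizing $\sum_{(i,j)\in\mathcal{E}} \trace(X_i^\tr Z_{ij} X_j)$, which is the cost in \eqref{cost_trace}.

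The third step is to verify that the feasible sets of the two problems coincide once the assumption is in force. Any $X_i$ admissible for \eqref{cost_synch} automatically satisfies $\vone_{p_i}^\tr X_i = \mathbf{m}_i^\tr$ for some nonnegative integer vector $\mathbf{m}_i$ summing to $p_i$; the hypothesis of the proposition is that this vector is known a priori, so enforcing it as an explicit linear constraint merely expresses the side information and does not discard any genuinely optimal solution of \eqref{cost_synch} (which, at the noiseless optimum $X_i X_j^\tr = Z_{ij}$, must already match the prescribed $\mathbf{m}_i$). The remaining binary and row-sum constraints are identical in the two formulations, so the equivalence follows.

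I do not expect any serious obstacle in this argument: the heavy algebra has already been done in Prop.~\ref{prop_synch_trace_NEW}, and the only new ingredient is the observation that knowing $\mathbf{m}_i$ turns the previously-variable summand $\mathbf{m}_i^\tr \mathbf{m}_j$ into a constant. The payoff, which motivates the whole proposition, is that the $\vone_{p_i \times p_j}$ rank-one correction disappears from the quadratic form, so the matrix that will feed the QUBO in the next subsection inherits the sparsity of $Z_{ij}$ rather than being densified to entries in $\{-1,+1\}$.
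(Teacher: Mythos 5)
Your proposal is correct and follows essentially the same route as the paper: reuse the expansion from Eq.~\eqref{eq_synch_trace_NEW}, drop the constant term $\trace(Z_{ij}^\tr Z_{ij})$, observe that the known-$\mathbf{m}_i$ assumption (enforced via the added constraint $\vone_{p_i}^\tr X_i = \mathbf{m}_i^\tr$) renders $\mathbf{m}_i^\tr \mathbf{m}_j$ constant, and flip the sign to obtain the maximization \eqref{cost_trace}. Your explicit check that adding the constraint does not discard the intended optimizer of \eqref{cost_synch} is slightly more careful than the paper's one-line remark on the same point, but it is not a different argument.
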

\begin{proof}
The starting point is Eq.~\eqref{eq_synch_trace_NEW}, which is copied here as a reference:
\begin{equation}
\begin{gathered}
 || Z_{ij} -  X_i X_j^{\mathsf{T}} ||_F^2 = \trace(Z_{ij}^{\mathsf{T}}Z_{ij}) + 
  \mathbf{m}_i^\tr \mathbf{m}_j 
  -2    \trace( X_i^\tr Z_{ij} X_j ) .
\end{gathered}
\end{equation}
As already observed, the first term is constant, for it depends on $Z_{ij}$ only; hence, it can be ignored in the optimization. 
Also the second term is constant, since $\mathbf{m}_i$ is known for each image $i$ \emph{by assumption}. 
Thus, the objective function \eqref{cost_synch} is equivalent to \eqref{cost_trace}. 
Observe that we should add extra constraints to take into account our additional assumptions, which force the amount of points per motion to be equal to some predefined values in every image, namely  $      \vone_{p_i}^\tr X_i  = \mathbf{m}_i^\tr$ (see also Remark \ref{rem_diagonal}). Hence we get the thesis.  
\end{proof}

\begin{remark}
Note that the knowledge of the number of points per motion was indeed essential in the proof of Prop.~\ref{prop_synch_trace}. In other terms, the synchronization problem \eqref{cost_synch} \textbf{is not equivalent} to \eqref{cost_trace} without such an assumption.  
\label{remark_no_equivalent}
\end{remark}

\begin{proposition}
If the amount of points per motion is known for each image, then Problem \eqref{cost_synch} is equivalent to
\begin{equation}
\begin{gathered}
\max_{X} \,  
 \vect(X)^\tr (I_{d \times d} \otimes Z) \vect(X), \\ \text{s.t.} \ \vect(X) \in \mathcal{B}^{dp}, \quad X \vone_{d} = \vone_{p}, \quad K X=M,  
\end{gathered}
\label{cost_trace_final}
\end{equation}
where $K$ and $M$ are defined as follows:
\begin{equation}
K= \operatorname{diag}(\vone_{p_1}^\tr, \vone_{p_2}^\tr, \hdots, \vone_{p_n}^\tr) , \quad 
M=
\begin{bmatrix}
\mathbf{m}_{1}^\tr &
\mathbf{m}_{2}^\tr &
\dots &
\mathbf{m}_{n}^\tr
\end{bmatrix}^\tr.
\label{eq_system_simplifiedass}
\end{equation} 
\label{prop_synch_trace_general}
\end{proposition}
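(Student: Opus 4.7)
The plan is to follow the same compact-matrix strategy used in Proposition \ref{prop_dense_matrix}, but starting from the simpler trace objective of Proposition \ref{prop_synch_trace} rather than the one involving $2Z_{ij} - \vone_{p_i \times p_j}$. This way, the lifted matrix that appears in the quadratic form will involve $Z$ itself (which is sparse), as opposed to $2Z - \vone_{p \times p}$ (which is dense).

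First I would rewrite the objective of \eqref{cost_trace} in terms of the block-matrices $X$ and $Z$ defined in \eqref{eq_X_Z}. By direct block computation, and using that the diagonal blocks of $Z$ are zero so that the diagonal terms $X_i^\tr \cdot 0 \cdot X_i$ vanish, one gets
\begin{equation*}
\sum_{(i,j) \in \mathcal{E}} \trace(X_i^\tr Z_{ij} X_j) = \trace(X^\tr Z X).
\end{equation*}
Then I would apply exactly the same two identities invoked in the proof of Proposition \ref{prop_dense_matrix}, namely $\trace(A^\tr B) = \vect(A)^\tr \vect(B)$ and $\vect(AYB) = (B^\tr \otimes A)\vect(Y)$, to obtain $\trace(X^\tr Z X) = \vect(X)^\tr(I_{d \times d} \otimes Z)\vect(X)$, which is the quadratic form appearing in \eqref{cost_trace_final}.

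Next I would deal with the constraints. The binary constraint $\vect(X_i) \in \mathcal{B}^{p_i}$ for every $i$ and the row-sum constraint $X_i \vone_d = \vone_{p_i}$ for every $i$ aggregate into $\vect(X) \in \mathcal{B}^{dp}$ and $X \vone_d = \vone_p$, respectively, exactly as in the proof of Proposition \ref{prop_dense_matrix}. The only genuinely new piece is handling the additional column-sum constraint $\vone_{p_i}^\tr X_i = \mathbf{m}_i^\tr$ coming from Proposition \ref{prop_synch_trace}. Stacking the $n$ row-vector equalities into a single matrix identity leads naturally to the block-diagonal matrix $K$ defined in \eqref{eq_system_simplifiedass}: since $K$ has block-row $i$ equal to $\vone_{p_i}^\tr$ acting only on the block $X_i$ of $X$, the product $KX$ has $i$-th row $\vone_{p_i}^\tr X_i$, so all $n$ equalities are captured simultaneously by $KX = M$.

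The main obstacle is not computational but conceptual: one must verify that the equivalence chain reaches all the way back to the original \eqref{cost_synch}. This is supplied by Proposition \ref{prop_synch_trace}, which requires the knowledge of $\mathbf{m}_i$ to drop the $\mathbf{m}_i^\tr \mathbf{m}_j$ term from \eqref{eq_synch_trace_NEW}, and by the observation that the added constraint $KX=M$ is precisely what enforces the assumption on the number of points per motion inside the optimization. Combining these two ingredients with the compact rewriting of the objective yields \eqref{cost_trace_final}, which completes the argument.
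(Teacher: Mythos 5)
Your proof is correct and follows essentially the same route as the paper: the paper's own proof simply invokes the reasoning of Proposition~\ref{prop_dense_matrix} for the objective and observes that $KX=M$ compactly stacks the constraints $\vone_{p_i}^\tr X_i = \mathbf{m}_i^\tr$, which is exactly what you spell out (with the useful extra detail of checking the block structure of $K$ and tracing the equivalence back through Proposition~\ref{prop_synch_trace}).
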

\begin{proof}
Problem \eqref{cost_trace} can be easily turned into the form \eqref{cost_trace_final} following the same reasoning as in the proof of Prop.~\ref{prop_dense_matrix}. Concerning the additional constraints, it is easy to see that $KX=M$ is just a compact way of storing all the equations of the form $\vone_{p_i}^\tr X_i  = \mathbf{m}_i^\tr $ simultaneously.  
\end{proof}

\begin{corollary}
If the amount of points per motion is known for each image, then Problem \eqref{cost_synch} can be mapped into a QUBO problem of the form \eqref{eq_qubo_linear_lambda}, namely: 
\begin{equation}
\begin{gathered}
\min_{\mathbf{y} \in \mathcal{B}^{dp} } \mathbf{y}^\mathsf{T} P \mathbf{y} 
+ \lambda_2 ||  A \mathbf{y} - \mathbf{b} ||^2 
+ \lambda_3 ||  E \mathbf{y} - \mathbf{f} ||^2,     \end{gathered}
  \label{eq_QUBO_final}
\end{equation}
    \text{where} $ 
    P = - I_{d \times d} \otimes Z, \ 
 \mathbf{y} = \vect(X), \
    A = (\vone_{d}^\tr \otimes I_{p\times p}), \
    \mathbf{b} = \vone_{p}, \ \mathbf{f}=\vect(M),  $ and $ 
    E= (I_{d\times d} \otimes K)   $.
\label{corollary_QUBO}
\end{corollary}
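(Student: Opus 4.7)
The plan is to start from Proposition \ref{prop_synch_trace_general}, which already establishes that under the assumption on $\mathbf{m}_i$, Problem \eqref{cost_synch} is equivalent to \eqref{cost_trace_final}. The Corollary then only needs to rewrite \eqref{cost_trace_final} in the canonical QUBO-with-soft-constraints template \eqref{eq_qubo_linear_lambda}. So the proof is essentially a bookkeeping exercise matching each piece of \eqref{cost_trace_final} to the corresponding piece of \eqref{eq_qubo_linear_lambda}.

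First, I would convert the maximization into a minimization by flipping the sign of the quadratic form: the objective $\vect(X)^\tr (I_{d\times d}\otimes Z)\vect(X)$ becomes the minimization of $\mathbf{y}^\tr P \mathbf{y}$ with $\mathbf{y}=\vect(X)\in\mathcal{B}^{dp}$ and $P=-I_{d\times d}\otimes Z$. Symmetry of $P$ follows from symmetry of $Z$ (which itself follows from the definition in \eqref{eq_X_Z}, assuming $Z_{ji}=Z_{ij}^\tr$). The binary constraint $\vect(X)\in\mathcal{B}^{dp}$ matches directly the admissible set in \eqref{eq_qubo}.

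Next, I would rewrite the two sets of hard linear constraints in the canonical form $A\mathbf{y}=\mathbf{b}$ and $E\mathbf{y}=\mathbf{f}$ using the standard vectorization identity $\vect(AYB)=(B^\tr\otimes A)\vect(Y)$ already used in footnote~\ref{footnote_kron}. Applying it to $X\vone_d=\vone_p$ yields $\vect(X\vone_d)=(\vone_d^\tr\otimes I_{p\times p})\vect(X)=\vone_p$, identifying $A=\vone_d^\tr\otimes I_{p\times p}$ and $\mathbf{b}=\vone_p$. Applying it to $KX=M$ (which is just $K\,X\,I_d=M$) yields $\vect(KX)=(I_{d\times d}\otimes K)\vect(X)=\vect(M)$, identifying $E=I_{d\times d}\otimes K$ and $\mathbf{f}=\vect(M)$.

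Finally, I would invoke the general recipe recalled in Sec.~\ref{sec:quantum_annealing}: hard constraints of the form $A_i\mathbf{y}=\mathbf{b}_i$ can be absorbed into the cost as penalty terms $\lambda_i\|A_i\mathbf{y}-\mathbf{b}_i\|^2$ for sufficiently large multipliers, giving exactly \eqref{eq_QUBO_final} with two penalty terms (one per constraint block) weighted by $\lambda_2$ and $\lambda_3$. I do not anticipate any real obstacle here: all the nontrivial content was already absorbed in Prop.~\ref{prop_synch_trace_general}, and what remains is pure algebraic rearrangement via Kronecker-vectorization identities and sign flipping. The only mild subtlety to flag is that the equivalence to \eqref{cost_synch} is up to the choice of sufficiently large penalty weights $\lambda_2,\lambda_3$, so that the global minimum of \eqref{eq_QUBO_final} necessarily satisfies both linear constraints exactly.
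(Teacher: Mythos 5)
Your proposal is correct and follows essentially the same route as the paper's own proof: both reduce the Corollary to Proposition~\ref{prop_synch_trace_general}, flip the sign of the quadratic form for minimization, and map the linear constraints to the canonical forms $A\mathbf{y}=\mathbf{b}$ and $E\mathbf{y}=\mathbf{f}$ via the Kronecker--vectorization identity. Your added remark that exact equivalence requires sufficiently large penalty weights $\lambda_2,\lambda_3$ is a worthwhile refinement the paper leaves implicit.
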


\begin{proof}
Problem \eqref{cost_trace_final} is the maximization of a quadratic cost with binary variables and linear constraints, hence definitions of $P$ and $\mathbf{y}$ are immediate. The linear constraints $ X \vone_{d} = \vone_{p} $ and $KX = M$ can be mapped into the forms $A \mathbf{y} = \mathbf{b} $ and $E \mathbf{y} = \mathbf{f} $ via properties of Kronecker product\footref{footnote_kron} and vectorization.  
\end{proof}

In summary, we derived an alternative QUBO formulation for motion segmentation with a \emph{sparse} matrix: $P$ matrix defined in \eqref{eq_QUBO_final} is sparse as it inherits the same sparsity pattern as $Z$. This was possible \textbf{under simplified assumptions}, \textit{i.e.,} the knowledge of the amount of points per motion in all images. 
This gives rise to the second variant of our approach, which is called \ourmethodsimple. 

\paragraph{Execution on AQC.} 
Once the QUBO for the target data is known, namely  \eqref{eq_QUBO_final_dense} for \ourmethod or  \eqref{eq_QUBO_final} for \ourmethodsimple, motion  segmentation can be solved via adiabatic quantum  optimization, as described in  Sec.~\ref{sec:quantum_annealing}.

\section{Related Work}
\label{sec:related_work} 
After having introduced \textsc{QuMoSeg}, we next  review the most related methods.

\textbf{Synchronization} refers to recovering elements of a group (associated to vertices in a graph) starting from a (redundant) set of pairwise ratios (associated to edges in the graph). 
Popular synchronization problems involve \emph{rotations} and \emph{rigid motions} \cite{Singer11,BernardThunbergAl15,ChatterjeeGovindu13,TronDanilidis14,BirdalArbelAl20,ErikssonOlssonAl18,RosenCarloneAl19,ThunbergBernardAl17,LiLingAl21}, which are at the core of tasks such as camera motion estimation in SfM \cite{OzyesilVoroninskiAl17}, point cloud registration \cite{GovinduPooja14} and SLAM \cite{CarloneTronAl15}. 
Other synchronization problems concern \emph{homographies} (which are related to image mosaicking \cite{SchroederBartoliAl11}) and \emph{affine transformations} (which were used to solve for global color correction \cite{SantellaniMasetAl18}).
Although synchronization has a well-established theory for the case where unknowns/measures belong to a group, specific routines can be possibly developed when the variables do not belong to a group and have a poorer structure. 
A notable example are \emph{partial permutations}, which appear in the context of multi-view matching \cite{PachauriKondorAl13,ChenGuibasAl14,ZhouZhuAl15,MasetArrigoniAl17,BernardThunbergAl19}. 
Another example are \emph{binary matrices}, which are related to part segmentation in point clouds \cite{HuangWangAl21} and motion segmentation in images \cite{ArrigoniPajdla19b} (the basis of our method).

\textbf{Motion Segmentation} aims at detecting moving objects in a scene given multiple images by grouping all the key-points moving in the same way \cite{SaputraMarkhamAl18}. 
Existing techniques fall into three main categories, depending on the assumptions made on the inputs.
Some methods assume extracted key-points and work with \emph{unknown} correspondences \cite{JiLiAl14,WangLiuAl18}. 
Other algorithms assume some \emph{local} information about correspondences in addition to the key-points, namely matches between image pairs \cite{ArrigoniPajdla19a,ArrigoniPajdla19b}.
The third category assumes known \emph{global} information about the correspondences, namely multi-frame {trajectories} \cite{YanPollefeys06,RaoTronAl10,LiGuoAl13,JiSalzmanLi15,LaiWangAl17,XuCheongAl19}. 
Our method belongs to the second category \cite{ArrigoniPajdla19a,ArrigoniPajdla19b}. 
Accordingly, our technique splits motion segmentation into multiple two-frame sub-problems and then finds a global consistency among the partial results. 
Our method is related to Arrigoni and Pajdla  \cite{ArrigoniPajdla19b}, as we adapt their model and the data structures. 
Their approach relies on a similar formulation as \eqref{cost_trace_final} but solves the problem over real variables instead of binary ones, ending up with an approximate solution based on a spectral decomposition. 
In this paper, instead, we derive a QUBO formulation from scratch and solve it \emph{without relaxation} on an AQC.
%

\textbf{Quantum Computer Vision.} Several quantum techniques are available for computer vision tasks, such as recognition and classification \cite{OMalley2018,Cavallaro2020}, object tracking \cite{LiGhosh2020}, 
transformation estimation \cite{golyanik2020quantum}, point set and shape alignment \cite{NoormandipourWang2021,SeelbachBenkner2021}, graph matching \cite{seelbach20quantum} and permutation synchronization \cite{BirdalGolyanikAl21}. 
Most of these methods are designed for an AQC. 
In \cite{OMalley2018}, a binary matrix factorization is applied to feature extraction from facial images, while in \cite{LiGhosh2020} redundant detections are removed in multi-object detection with the help of an AQC.
Another method classifies multi-spectral images  with quantum SVM \cite{Cavallaro2020}. 
A quantum approach for correspondence problems on point sets \cite{golyanik2020quantum} recovers rotations between pairs of point sets, which are approximated as sums of basis matrix elements. 
The qKC method described in \cite{NoormandipourWang2021} employs both classical and quantum kernel-based losses for point set matching. 
In contrast to \cite{golyanik2020quantum}, qKC is designed for a circuit-based quantum computer.  
QGM is the first approach for matching small graphs using AQC   \cite{seelbach20quantum}. 
Q-Match \cite{SeelbachBenkner2021} can non-rigidly match 3D shapes with up to $500$ points and it overcomes the hardware limitations of the modern AQC (\textit{e.g.,} the qubit connectivity pattern) by an iterative optimization scheme.

The quantum method most closely related to ours is QSync \cite{BirdalGolyanikAl21}, as it also uses the framework of synchronization but solves a \emph{different} problem. 
There are substantial differences between 
our task and permutation synchronization \cite{BirdalGolyanikAl21}: 
QSync operates on permutation matrices (\textit{i.e.,} both rows and columns sum to 1), whereas the binary matrices in our case are less constrained (only rows sum up to 1). 
Although this might seem a minor difference, the term $\trace(X_jX_i^\tr X_i X_j^\tr)$ in QSync is constant and can be ignored in \eqref{eq_synch_trace_NEW}; the absence of ``columns sum to 1'' constraint means that we either i) require additional assumptions (\ourmethodsimple) or ii) resort to different computations resulting in a new (dense) matrix (\ourmethod). 
In QSync, the number of variables scales quadratically with the number of points, whereas our method is more efficient per construction (\textit{e.g.,} it uses matrices with $2k$ entries if there are two motions, see Remark \ref{remark_nopoints}). Hence, we can handle larger problems: five images of two motions with ten points each, \textit{i.e.,} $100$ points (see Sec.~\ref{sec:experiments}) \textit{vs} five points in four images \cite{BirdalGolyanikAl21}. 
%

\section{Experiments}\label{sec:experiments} 

In this section, we report experimental results on synthetic scenarios, {a small dataset derived from the Hopkins benchmark \cite{TronVidal07} and a new real dataset}.  
We evaluate our method on D-Wave's AQC of the latest generation,  \textit{i.e.,} D-Wave Advantage4.1 (Adv4.1), which is an AQC of the Pegasus architecture. 
It contains ${\approx}5k$ physical qubits organized in cells of $24$ qubits; each qubit is coupled to $15$ other qubits; the total number of couplers is ${\approx}40k$ \cite{Boothby2020}. Adv4.1 operates at temperatures below $17mK$ and is accessed remotely via Ocean \cite{OceanTools2021}. We run all experiments with $20{\mu}s$ annealing time (no pause). 
The total QPU runtime of our experiments including  overheads amounts to over $15$ minutes (over $10^6$  obtained samples). 
We sample $1k$ times and take the lowest-energy sample as the result.

\textbf{Evaluation Methodology.} 
Since our method is the first quantum approach to segmentation with two-frame correspondences, we compare it with traditional approaches (\textit{i.e.,} not operating on AQC), namely  \textsc{Mode} \cite{ArrigoniPajdla19a} and \textsc{Synch} \cite{ArrigoniPajdla19b}, whose code is available online \cite{motionsegcode}. Both our approach and the competitors take the same input, namely a set of pairwise segmentations represented as a $dp{\times}dp$ block-matrix $Z$ -- see \eqref{eq_X_Z} -- and they compute the absolute segmentations either in the form of a $p{\times}d$ matrix $X$ or a vector $\mathbf{y}=\vect(X)$ of length $dp$. 
In order to compare a given solution $\mathbf{y} \in \mathcal{B}^{dp}$ with the ground-truth solution named $\mathbf{y}_{gt} \in \mathcal{B}^{dp}$, we use the \emph{accuracy} $\mu$\footnote{Other measures can be considered with similar results, such as the misclassification error, which is widely adopted in motion segmentation.}, as done also in \cite{BirdalGolyanikAl21}. It is defined as the number of correct entries over the total amount of entries, namely
\begin{equation}
    \mu = 1 - \mathcal{H}(\mathbf{y}_{gt}, \mathbf{y})/{dp}, 
\end{equation}
where $\mathcal{H}$ denotes the Hamming distance.

 \begin{figure}[!t] 
  \centering
    \subfloat[Accuracy (max value 1) for several methods on synthetic data versus input noise. \label{fig:synthetic}]{
\includegraphics[width=0.30\linewidth]{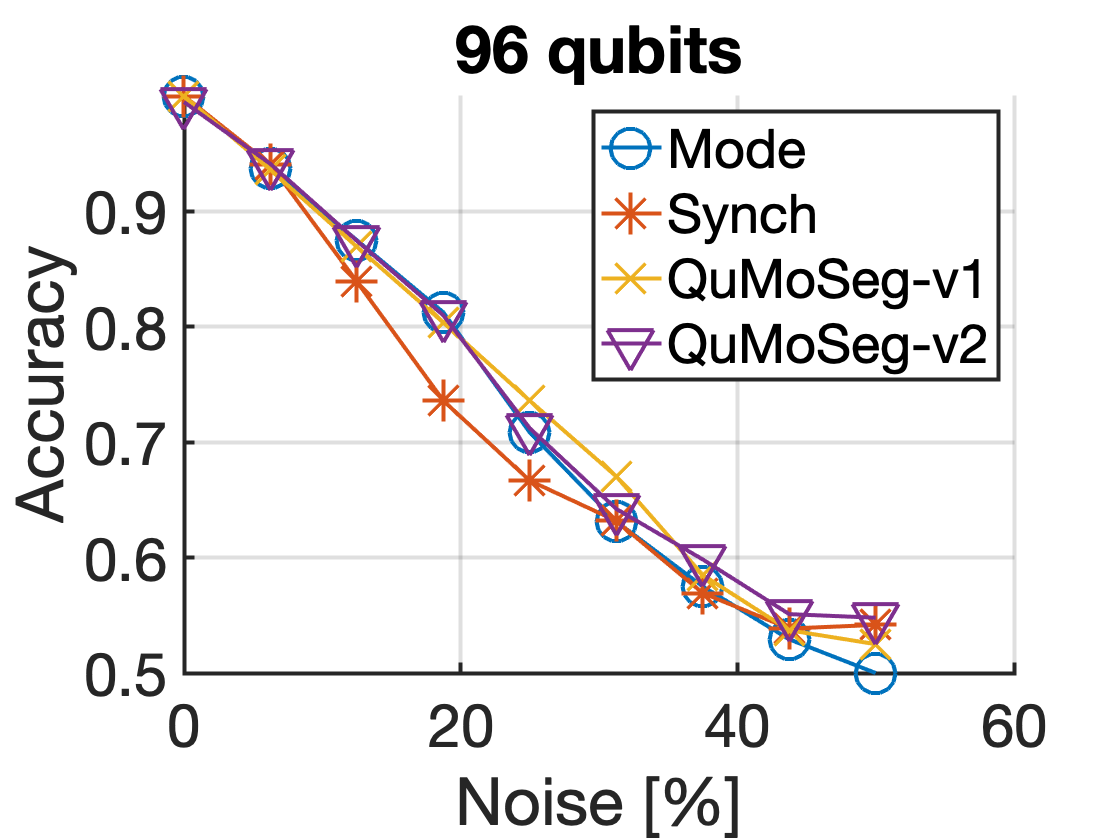} \quad \quad 
\includegraphics[width=0.30\linewidth]{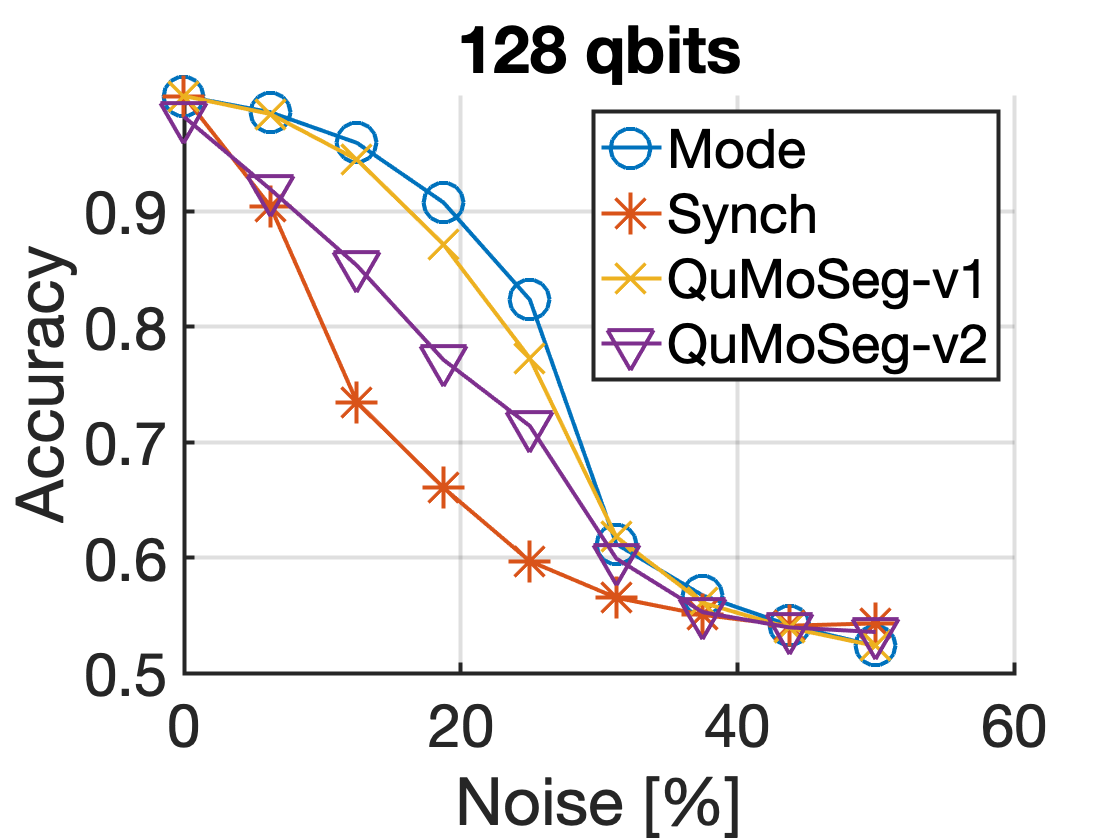} 
} \quad 
    \subfloat[Average maximum chain length (left), average number of physical qubits (middle) and probability of finding a solution (right) versus number of logical qubits in Q-MSEG dataset. \label{fig:chain_phys}]{
    \includegraphics[width=0.32\linewidth]{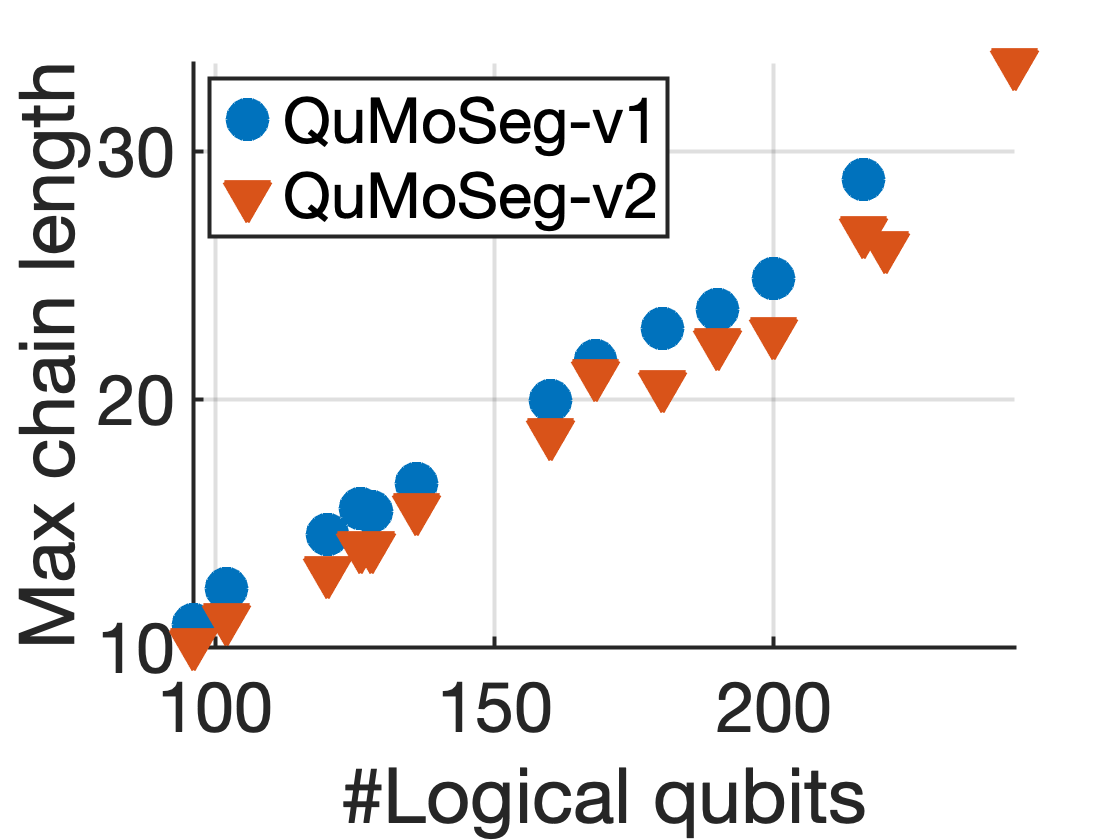}  
\includegraphics[width=0.32\linewidth]{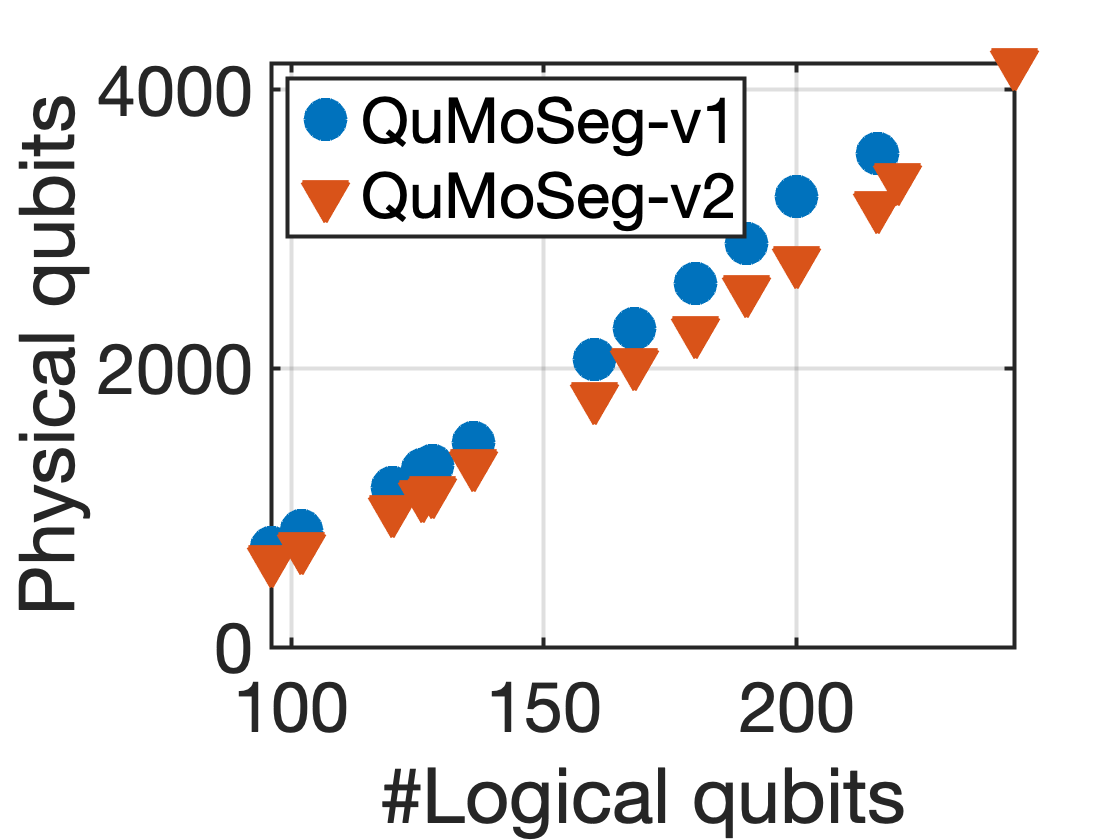}
\includegraphics[width=0.32\linewidth]{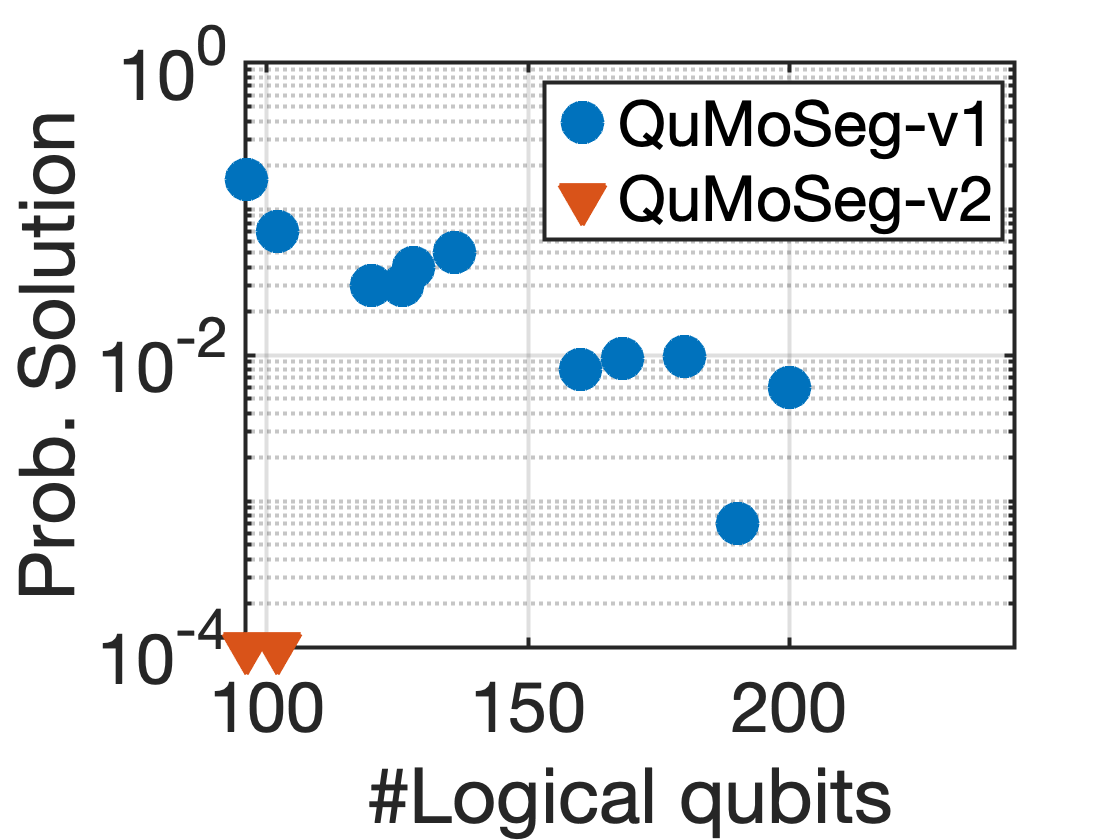} 
    }
\caption{
Results of synthetic experiments (a) and of real experiments on Q-MSEG (b).
}
\end{figure}

\textbf{Experiment with Synthetic Noise.}
We consider 20 synthetic configurations with 3 or 4 images, 2 motions and 16 points per image, resulting in 96 or 128 qubits. In order to create the ground-truth segmentation $X$, each point is assigned to a motion which is randomly chosen among the two available ones. All the analyzed methods can solve these problems with accuracy $\mu=1$. Next, we derive pairwise segmentations from \eqref{eq_consistency} and we systematically inject into them increasing amounts of noise:
in each pairwise segmentation we switch the motion of a percentage of points ranging from $0\%$ to $50\%$ (meaning that half of the points are corrupted). 
The results of this experiment are shown in Fig.~\ref{fig:synthetic}, which reports the accuracy (averaged over 20 problem instances) for the analyzed methods. 
\ourmethod is certainly on par with the best traditional approach (namely \textsc{Mode} \cite{ArrigoniPajdla19b}). 
Concerning \ourmethodsimple, it is worth noting that is starts from a similar formulation to \textsc{Synch} \cite{ArrigoniPajdla19b}, but it solves the problem \emph{without} relaxation: working over binary variables without relaxation in combination with AQC significantly improves the results. There are no significative differences between the two variants of our method on the smallest scenario with 96 qubits.

\textbf{Experiments on Q-MSEG Dataset.} 
Due to the lack of small-scale datasets usable for our scenario (\textit{i.e.,} whose size is manageable by current AQC),
we generate a new dataset for motion segmentation with ground-truth annotations, which comprises six images depicting three planar objects captured from diverse viewpoints. We focus on planar motions (where the underlying model is the homography) since they can be recovered from a lower amount of points than general motions\footnote{At least four points are needed to estimate a homography, whereas at least seven points are required for the fundamental matrix \cite{HartleyZisserman04}.} 
(where the corresponding model is the fundamental matrix), hence allowing more flexibility in the experimental setting. 
Each object is manually annotated with respectively 10, 11 and 12 keypoints, selected on highly-textured locations. 
Following \cite{BirdalGolyanikAl21,Wang2018MultiimageSM}, we extract features corresponding to each keypoint using an AlexNet model \cite{krizhevsky2012alexnet} pretrained on ImageNet. 
Several motion segmentation problems are derived from the dataset: we consider 14 different choices for the number of points/images/motions (see supplementary), and we sample 20 problem instances for each configuration, resulting in 280 problem instances in total. 
For each problem instance, we perform feature matching on every image pair via nearest neighbor search, where the cosine distance is used as the distance metric. Then, different motions are identified in each image pair by fitting multiple homographies to correspondences \cite{MagriFusiello15}. This defines a set of pairwise segmentations which represent the input of \textsc{Mode} \cite{ArrigoniPajdla19a}, \textsc{Synch} \cite{ArrigoniPajdla19b} and our quantum methods. In order to enrich the evaluation, we evaluate here also the method by Xu et al.~\cite{XuCheongAl19} (with public code \cite{xuncode}), although it makes different assumptions on the input (i.e., it requires multi-frame trajectories, which were computed by us following the same procedure as in \cite{ArrigoniPajdla19a,ArrigoniPajdla19b}).
For completeness, we also optimize the objective functions of our approach with a simulated annealing (SA) solver \cite{Kirkpatrick1983,DWave_neal}: 
while in general it performs on par with QA for smaller problems, it provides an indication on the QPU's solution accuracy for the largest problems we test, which future QPU generations can potentially reach and outperform.

 \begin{table}[!t]
 \centering
 \caption{Average accuracy (1 is the best) for several methods on our Q-MSEG dataset. The highest accuracy is in boldface.  
    \label{tab_real}}
 \resizebox{0.95\columnwidth}{!}{
 \begin{tabular}{ l ccc c c c c c c c c c c c c c} 
 \hline\noalign{\smallskip}
  \# Qubits: & 96 & 102	&120&	126&	128&	136&	160&	168&	180&	190&	200&	216&	220&	243 \\
 \noalign{\smallskip}
 \hline
 \noalign{\smallskip} 
 Xu et al. \cite{XuCheongAl19} & 0.89 & 0.89 & 0.94 & 0.75 & 0.96 & 0.97 & 0.86 & 0.86 & 0.97 & 0.88 & 0.96 & 0.77 & 0.83 & 0.74 \\ 
\textsc{Mode} \cite{ArrigoniPajdla19a} & 0.93 &    0.93 &   0.96 & 0.93 &  \bf  0.97 &    0.97  & \bf 0.98 & 0.99 &  \bf  0.98 & \bf 0.99 &  \bf 0.99 &   0.93 &  \bf  1 &  \bf 0.94 \\
\textsc{Synch} \cite{ArrigoniPajdla19b} & 0.93 &    0.94 &    0.95 &    0.95 &    0.84 &    0.92 &    0.97 &  \bf 1 &    0.89 & 0.95 &    0.90 &   \bf  0.94 &    0.99 &    0.92 \\
 \ourmethod &  0.97	& 0.97	& 	0.97	& 	0.96	&	0.95	& \bf	0.98	& \bf	0.98	& 0.99	& \bf	0.98	& \bf	0.99	& \bf	0.99	&	0.64	&	--	& -- \\
\ourmethodsimple & 0.96	&  0.97	&	0.95	&	0.94	&	0.89	&	0.89	&	0.88	&	0.85	&	0.74	&	0.75	&	0.79	&	0.59	&	0.75	&	0.58 \\
\ourmethod, SA & 0.97	&  0.97	& 0.97	&  0.96	& 0.95	& \bf 0.98	&  \bf 0.98	& \bf 1	& \bf 0.98	& \bf 0.99	& \bf 0.99	& 0.68 & 0.98	& 0.72 \\
\ourmethodsimple, SA & \bf 0.98	& \bf 0.99	& \bf 0.99	& \bf 1	& 0.96	& \bf 0.98 & \bf 0.98	& \bf 1	&  0.94	& 0.97 	& \bf 0.99 	& 0.80	& \bf 1	& 0.59 \\ 
 \noalign{\smallskip} 
 \hline
 \end{tabular}
 }
  
 \end{table}

Results are given in Tab.~\ref{tab_real} which reports, for each configuration, the mean accuracy $\mu$ (over 20 problem instances) for all the analyzed methods. 
Results show that there is no clear winner, since none of the methods outperforms all others in all cases. 
In particular, \ourmethod is better than the state of the art on small-scale problems (\textit{i.e.,} 96-126 qubits) and comparable or better than the best traditional method (\textit{i.e.,} \textsc{Mode} \cite{ArrigoniPajdla19a}) on medium-scale problems (\textit{i.e.,} 136-200 qubits). On the largest cases, instead, either the performances of \ourmethod significantly drop (\textit{i.e.,} 216 qubits) or it was unable to find a minor embedding and, hence, 
a solution (\textit{i.e.,} 220-243 qubits). 
Concerning \ourmethodsimple, we can observe that---although working under easier assumptions---it is, in general, worse than \ourmethod. This might be caused by the difficulty of satisfying the additional (simplified)  constraints in practice (as they are treated as soft instead of hard). 
Note that the performances of our approaches are largely affected by the limitation of current QPU. In this respect, the on-par or higher performance of SA (see Tab.~\ref{tab_real}) suggests that our QUBO approach to motion segmentation is sound and that future QPU generations are expected to reach or surpass that result.
Qualitative results from an example with $96$ qubits are shown in  Fig.~\ref{fig:qualitative}: in this case, \ourmethod outperforms existing methods. 

Fig.~\ref{fig:chain_phys} visualises how the expected number of physical qubits and the maximum chain length are increasing with the increasing problem size.
It also reports the probability of finding a solution in a sampling, which is calculated as the portion of optimal (lowest-energy) solutions among all solutions. 
Note that \ourmethod has a non-zero probability to find an optimal solution for all problems except the largest embeddable one with $216$ qubits, whereas \ourmethodsimple has a non-zero probability only on the two smallest cases. 

 \begin{table}[!t]
 \centering
   \caption{Accuracy ($1.0$ is the best) for several methods on sub-problems sampled from the Hopkins dataset \cite{TronVidal07}. The highest accuracy is in boldface.  
   \label{tab_hopkins}}
 \resizebox{1\columnwidth}{!}{
 \begin{tabular}{ l ccc c c c c c c c c cccccccc c c c c} 
 \hline\noalign{\smallskip}
  \# Qubits: & 120   & 126   & 132  &  138   & 144  &  156   & 162  &  168   & 174  &  180  &  186   & 192  &  198   & 204   & 210  & 216  &  222  &  228  &  234   & 240\\
  \noalign{\smallskip}
 \hline
 \noalign{\smallskip} 
   Xu et al. \cite{XuCheongAl19} &     0.80 &    0.78  &   0.81  &   0.79  &   0.83   &  0.81 &    0.84  &   0.81  &   0.85  & 0.89  &   0.88  &   0.94 &    0.96  &   0.96  &   0.97 & \bf 1  &   0.98  & \bf  1 & \bf0.99  & \bf  1 \\
  \textsc{Mode} \cite{ArrigoniPajdla19a} &     0.89  &  0.91  &  0.90  &  0.93  &  0.92  &  0.94  &  0.95  &  0.95 &   0.96 & 0.95 &   0.97  &  0.98  &  0.98  &  0.98 &   0.98 &   0.99  &  0.98  &  0.98 &\bf 0.99  &  0.99 \\
  \textsc{Synch} \cite{ArrigoniPajdla19b}  &  0.87  &  0.93  &  0.95  &   0.96  & \bf 0.99  &  0.96  & \bf 0.99   & \bf0.99   & 0.96 & 0.99  & \bf 1  & \bf 1 &   0.99 & \bf  1 & \bf  1  &  0.70  &  0.97 &  \bf 1 & \bf 0.99  &  0.65 \\
   \ourmethod &  0.92 &  0.89 &  0.93  & 0.93  & 0.93  & 0.93 &  0.95  & 0.94 &   0.96 &  0.95  & 0.96 &  0.97 &  0.96 & - & -  & -  & -  & -  & -  & -  \\
\ourmethodsimple & 0.91 &  0.92  & 0.91  & 0.92 &  0.94 &  0.89  & 0.91 &  0.89 &  0.90  & 0.88 &  0.88 &  0.89 & 0.88 & 0.89  & 0.88  & -  & -  & -  & -  & - \\
\ourmethod, SA & {0.93} & {0.90} & {0.92} & {0.94}  & {0.93} & {0.94} &  {0.95}  & {0.96} & {0.96}  & {0.96}   & {0.98} & {0.98}  & {0.98}  & {0.98} & {0.99}  & {0.99}  & {0.98}  & {0.98}  & \bf{0.99}  & {0.99} \\
\ourmethodsimple, SA & \bf {0.96} & \bf {0.97}  & \bf {0.98}  &  \bf{0.98}  &  \bf {0.99}& \bf  {0.99} & \bf {0.99} &  {0.97} & \bf {0.99}  & \bf{1} & \bf{1} & \bf{1} &\bf {1} & \bf {1} & \bf{1} &\bf {1}  & \bf{0.99}  &\bf {1}  & \bf{0.99}  & \bf{1} \\
 \noalign{\smallskip} 
 \hline
 \end{tabular}
 }
 \end{table}

\textbf{Experiments on Hopkins Benchmark.}
Starting from the popular Hopkins155 dataset \cite{TronVidal07}, we created small problems (with 120-240 qubits) by sampling a subset of images/points from the \textit{cars2$\_$06$\_$g23} sequence (see our supplement for details). For each configuration, 20 instances were created, resulting in 400 examples in total. We used the fundamental matrix model to produce the pairwise segmentations which are given as input to \textsc{Mode} \cite{ArrigoniPajdla19a}, \textsc{Synch} \cite{ArrigoniPajdla19b} and our quantum methods. We also consider the method by Xu et al.~\cite{XuCheongAl19} and the simulated annealing (SA) solver \cite{Kirkpatrick1983,DWave_neal}, as done previously. Results are given in Tab.~\ref{tab_hopkins}, showing that we outperform Xu et al.~\cite{XuCheongAl19} (for number of qubits ${<}198$) and obtain comparable performance to classic methods \cite{ArrigoniPajdla19a,ArrigoniPajdla19b} in several cases. 
Results for SA show that the global optima of our QUBOs match expected solutions (and that the obtained energy landscapes are as expected), even though the solutions cannot be recovered by QA in all cases due to hardware limitations. In particular, for small problems, the simulated annealing performs on par with QA (as expected), whereas large problems can be solved with the SA only.

\textbf{Parameters.}
In the tests with Q-MSEG and Hopkins, we use $\lambda_1 = 10.0$ (\textsc{v1}) and $\lambda_2 = 10.0$, $\lambda_3 = 4.0$ (\textsc{v2}). 
For the synthetic experiments, we set the parameters as follows: $\lambda_1 = 14.0$ (\textsc{v1}) and $\lambda_2 = 27.5$, $\lambda_3 = 3.2$  (\textsc{v2}). 
The chain strength required to keep the chains of physical qubits intact, is chosen as a linear function $c(k) = a k + b$ in the number of the logical qubits. 
We set $a = 0.13508$, $b =  -1.94207$ and $a = 0.1238$, $b = -1.3180$ for \textsc{v1} and \textsc{v2}, respectively. 

\textbf{Discussion.}
In our experiments, \textsc{QuMoSeg} achieves on-par performance\footnote{Note that this reflects the current situation in the field: indeed, other quantum methods \cite{BirdalGolyanikAl21,SeelbachBenkner2021} do not outperform classical methods in all scenarios too.} with non-quantum state of the art, showing the viability of a quantum approach for segmentation. 
Among the two proposed variants, \ourmethod is the most accurate and  \ourmethodsimple can solve larger problems. 
Moreover, our approach in a combination with SA shows highly promising results on both Q-MSEG and the data we generated from Hopkins.
The characteristics of the current quantum hardware starkly  influence the performance of \textsc{QuMoSeg}: Problems with ${>}120$ points cannot be minor-embedded, and the largest mappable problems require maintaining too long qubit chains, which impedes the optimum search. As many other quantum algorithms, \textsc{QuMoSeg} will benefit from improved quantum hardware, both in terms of accuracy and solvable problem sizes. 
{
Indeed, it is expected that the possibility to solve combinatorial problems without approximation will give practical advantages for large-scale problems in the future.}

\section{Conclusion}\label{sec:conclusion} 

We propose the first motion segmentation approach for an adiabatic quantum computer, 
which shows highly promising results and reaches state-of-the-art accuracy on a wide range of problems. 
We hope that the demonstrated progress encourages more work on quantum computer vision in the future. 

\bibliographystyle{splncs04}
\bibliography{Definitions,NostriNEW,AndreaNEW,LucaNEW,FedeNEW,quantum}

\clearpage

\appendix
\section{Supplementary Material}

This appendix provides further details on our experiments.

\smallskip
\textbf{Competing methods.} \textsc{Mode} \cite{ArrigoniPajdla19a}, \textsc{Synch} \cite{ArrigoniPajdla19b} and Xu et al. \cite{XuCheongAl19} are evaluated in Matlab on a 2020 MacBook Pro with 1.4 GHz processor and 8 GB RAM.

\smallskip
\textbf{Q-MSEG Dataset.} 
We generate a new dataset for motion segmentation with ground-truth annotations, which comprises six images depicting three planar objects captured from diverse viewpoints, shown in Fig.~\ref{fig:dataset}. 
Each object is manually annotated with respectively 10, 11 and 12 keypoints, selected on highly-textured locations. 
Several motion segmentation problems are derived from the dataset by randomly sampling a subset of points/images/motions. In particular, we consider 14 different choices for the number of points/images/motions (see Table \ref{tab_dataset}), and we sample 20 problem instances for each configuration, resulting in 280 problem instances in total. See Tab.~\ref{tab_real_supp} and the main paper for results.

\begin{table}[!h]
\caption{
Statistics of the Q-MSEG Dataset. Each configuration has $n$ images and $d$ motions. In each image, there are $m_1$ points in the first motion, $m_2$ in the second motion and $m_3$ in the third motion (if available). The total number of logical qubits is reported for each configuration, which is given by $ dn(m_1+m_2+m_3)$. 
\label{tab_dataset}}
\centering
\resizebox{0.7\linewidth}{!}{
\begin{tabular}{ c ccc c c c c c c c c c c c c c} 
\hline\noalign{\smallskip}
  \# Qubits: & 96 & 102	&120&	126&	128&	136&	160&	168&	180&	190&	200&	216&	220&	243 \\ 
\noalign{\smallskip}
\hline
\noalign{\smallskip} 
 $n$   &3& 3  & 3  & 3  & 4  & 4 & 4  & 4  & 5 & 5  & 5 & 3 & 5 & 3 \\ 
 $d$   &2& 2  & 2  & 2  & 2  & 2 & 2  & 2  & 2 & 2  & 2 & 3 & 2 & 3 \\ 
 $m_1$ &8 &  8& 10 & 10 & 8  & 8 & 10 & 10 & 9 & 9  & 10& 8 & 11 & 8 \\
 $m_2$ &8  &9 & 10 & 11 & 8  & 9 & 10 & 11 & 9 & 10 & 10& 8 & 11 & 9 \\
 $m_3$ &--&-- & -- & -- & -- & --& -- & -- & --& -- & --& 8 & -- & 10 \\
\noalign{\smallskip}
\hline
\end{tabular}
}
\end{table}

\begin{figure}[h!] 
  \centering 
\includegraphics[width=0.32\linewidth]{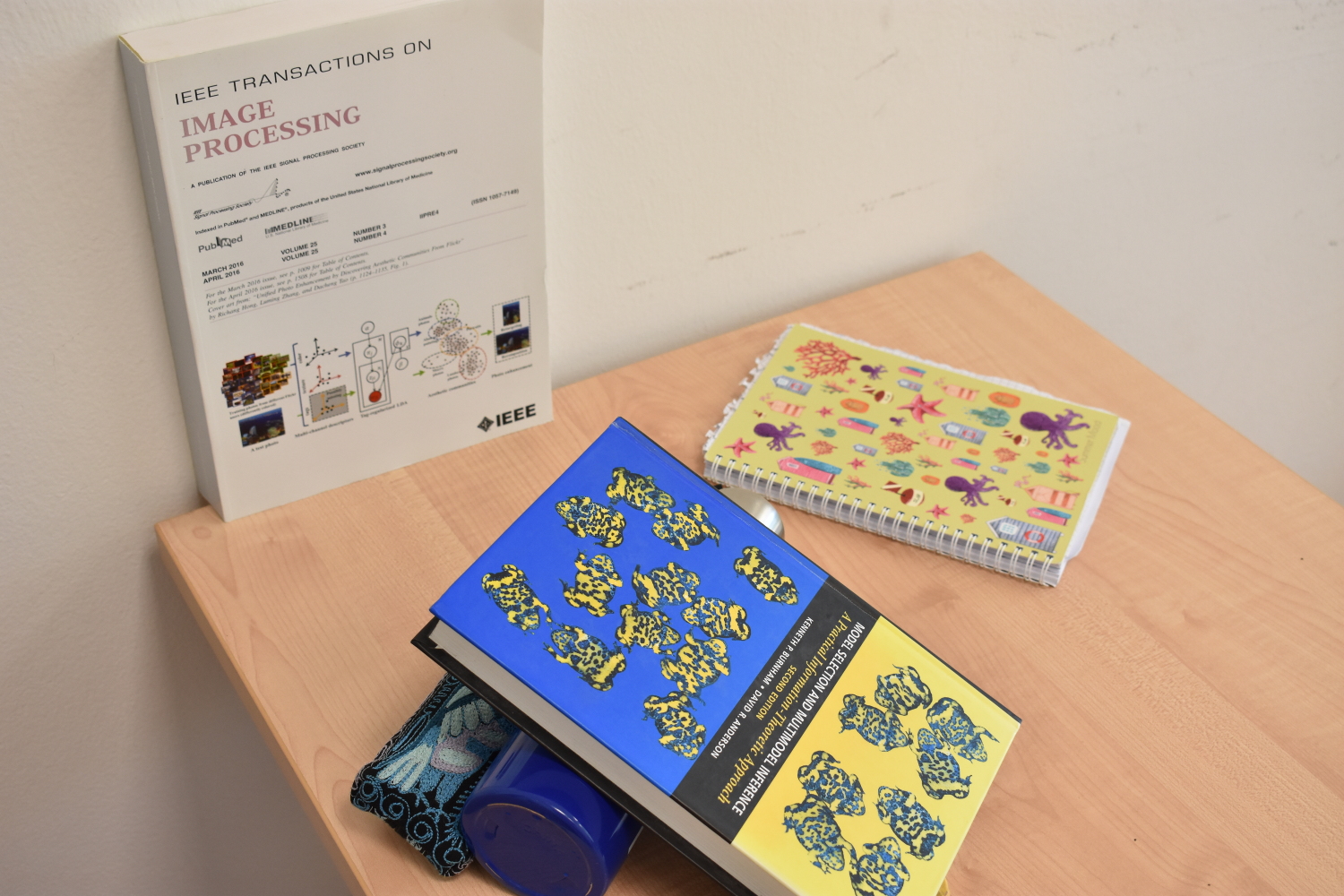} 
\includegraphics[width=0.32\linewidth]{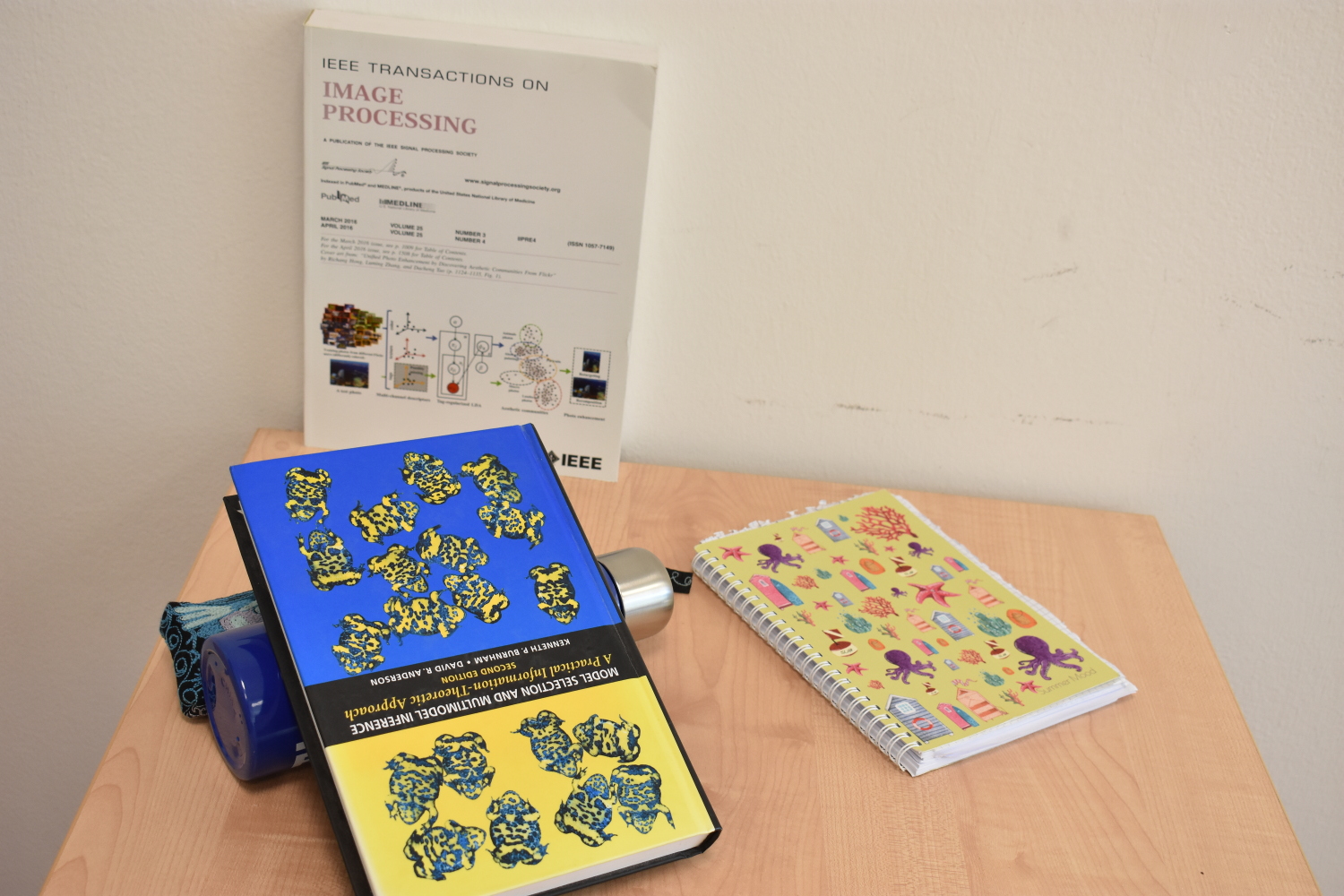} 
\includegraphics[width=0.32\linewidth]{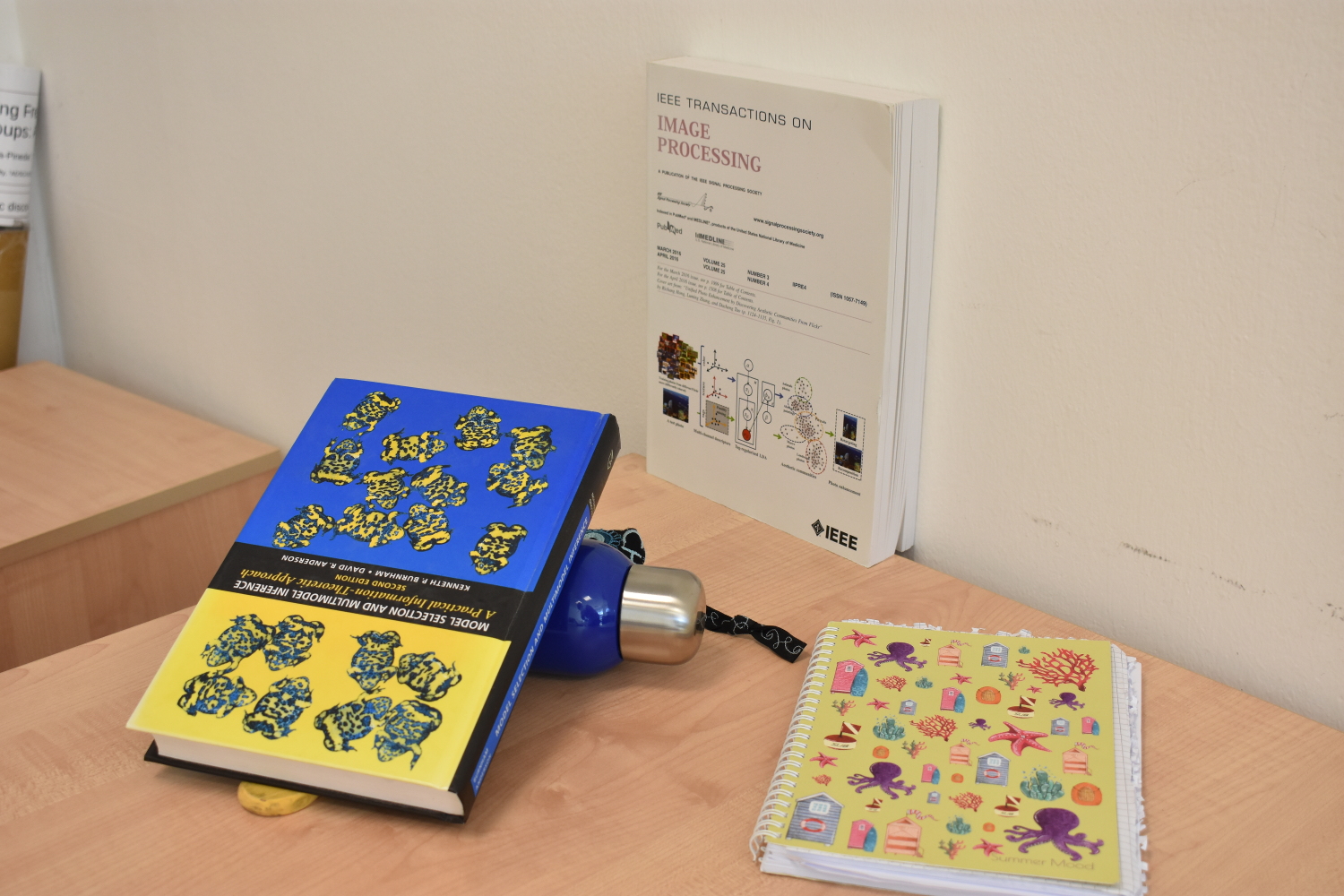}
\includegraphics[width=0.32\linewidth]{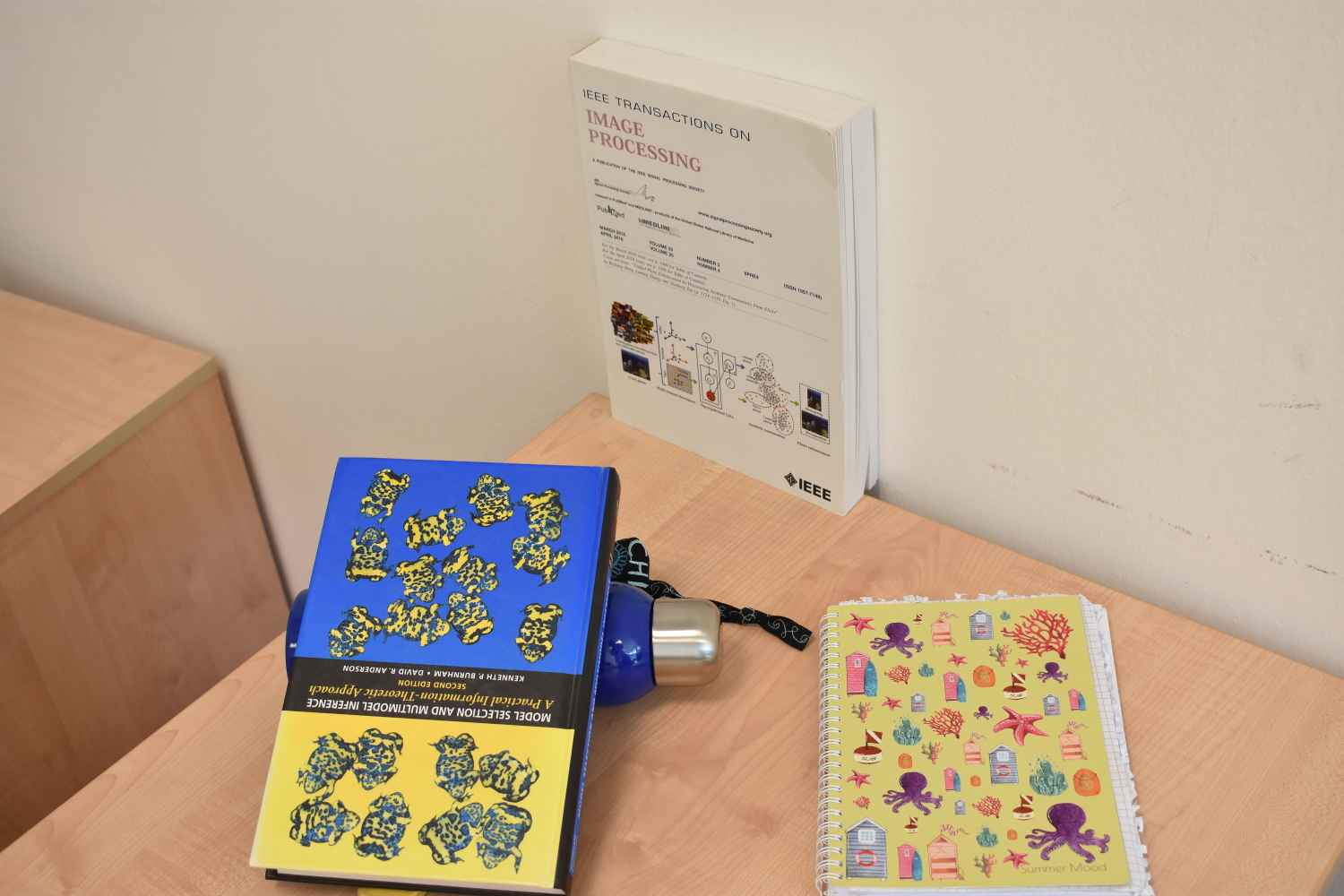} 
\includegraphics[width=0.32\linewidth]{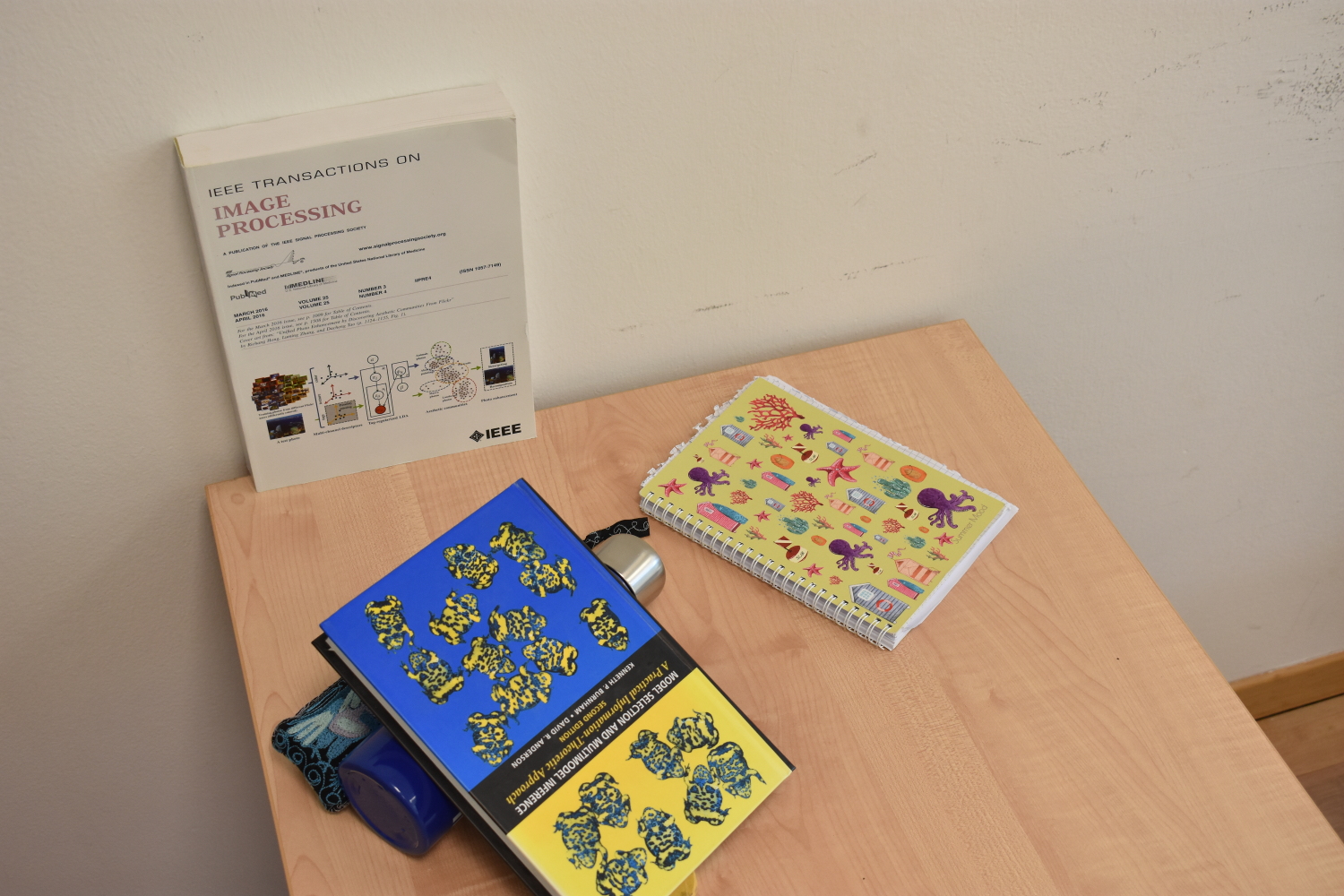} 
\includegraphics[width=0.32\linewidth]{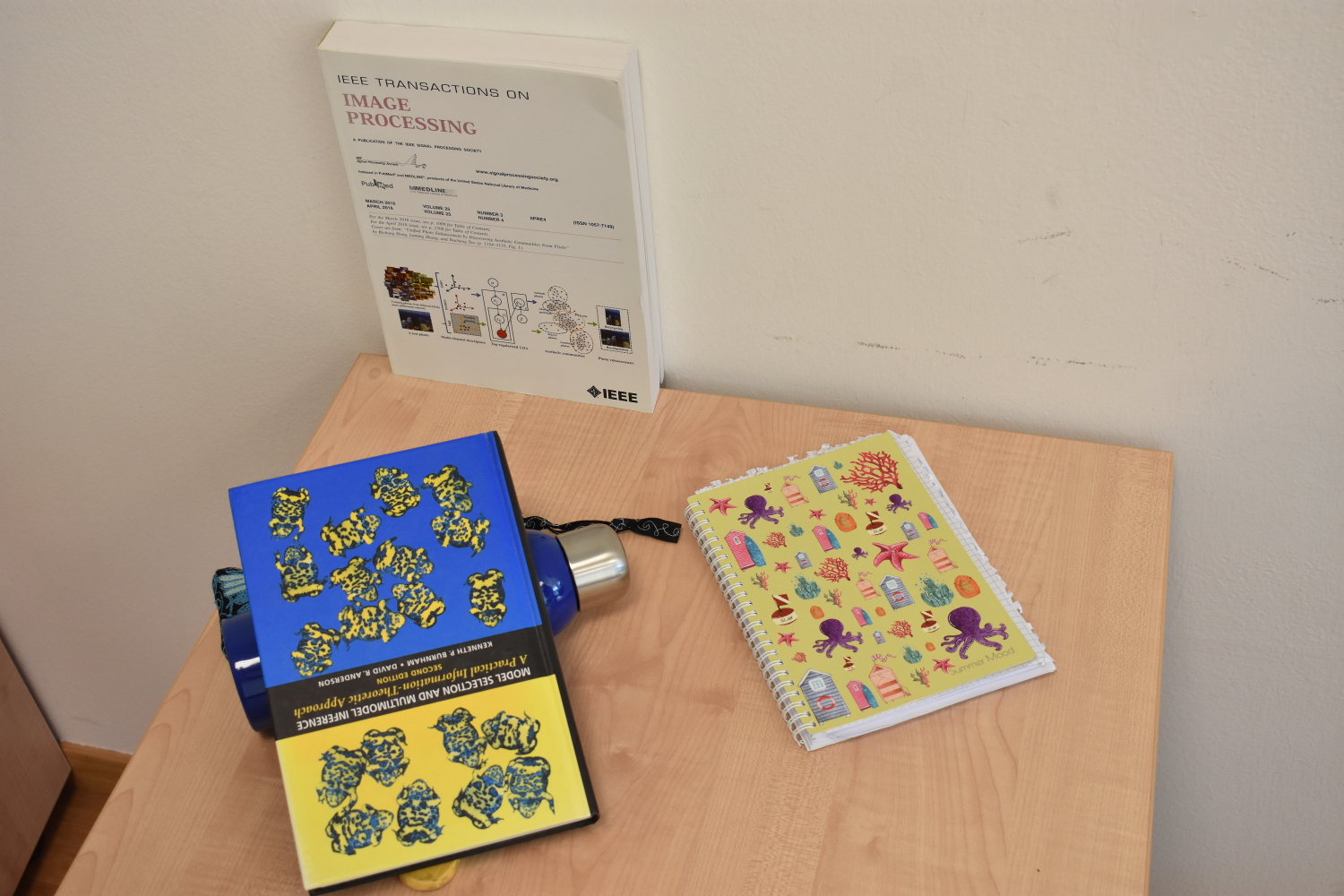} 
\caption{Images from the Q-MSEG dataset depicting three planar motions.} 
\label{fig:dataset} 
\end{figure}

\smallskip
\textbf{Parameter Selection.} To select parameters for our method, we perform a systematic (exhaustive) grid search on small problem instances (with 24 bits). 
Talking in terms of the energy landscape, the purpose of this step is to find ranges of $\lambda$ which push the lowest-energy solution of the target QUBO as close as possible to the desired ground-truth solution. 
We observe that the obtained parameter ranges generalise well to larger problems when solved on a QPU (\textit{i.e.,} all tested problem sizes ranging from $96$ qubits to $243$ qubits). 
A similar observation was made in QSync \cite{BirdalGolyanikAl21}. 
Note that parameters can be selected in comparably large ranges, which allows safely using the same parameters across many---and much larger---problems. 
In particular, the parameters selected for our Q-MSEG dataset generalize well to Hopkins data without any tuning. 
Concerning the chain strength, the coefficients $a$ and $b$ of function $c$ (see the main paper) are calculated by a linear regression applied to the measured average maximum chain lengths observed in exemplary problems of different sizes (see Fig.~\ref{fig:ch_str}).

\begin{figure}[t!] 
  \centering 
\includegraphics[width=0.33\linewidth]{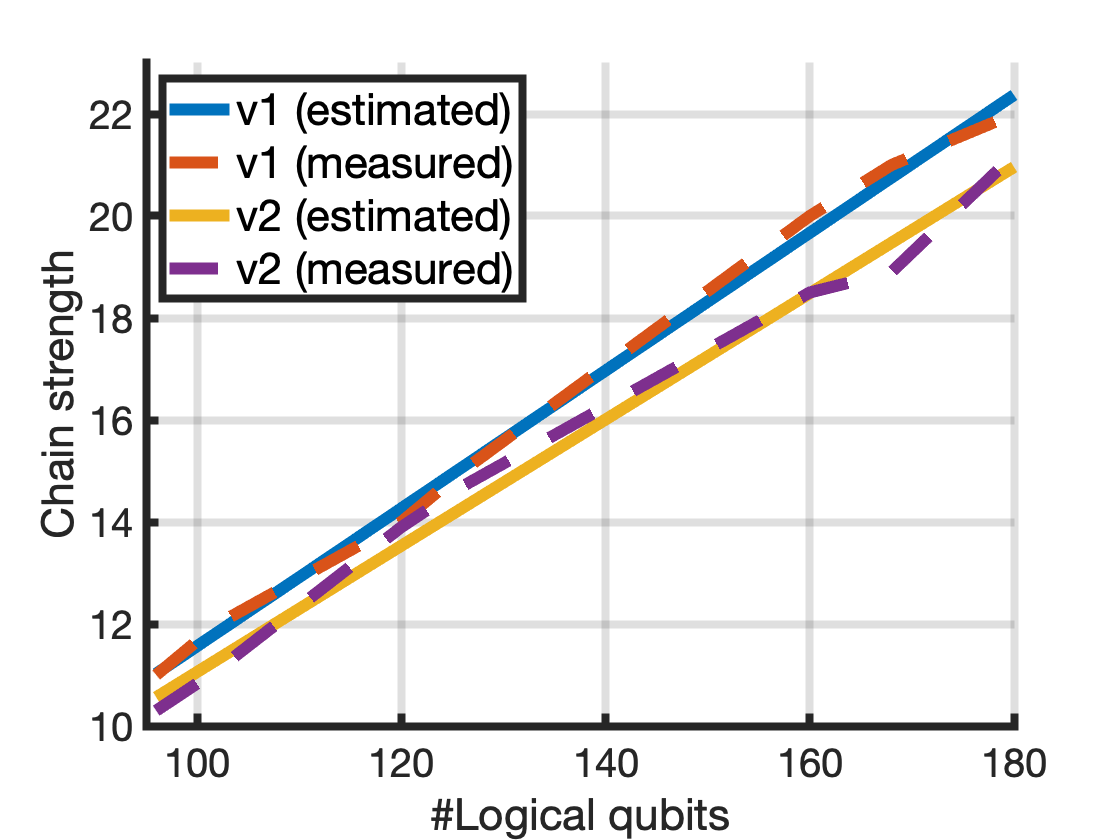}
\caption{
Chain strength versus number of logical qubits in Q-MSEG dataset. 
} 
\label{fig:ch_str} 
\end{figure}

 \begin{table*}
   \caption{
The accuracy ($1.0$ is the best) and its st.~dev. for several methods on Q-MSEG dataset. 
For \textsc{QuMoSeg}, we also report the number of found ground-truth solutions (``$\#$sol'') and the probability (``prob'') of measuring them as lowest-energy samples, and we perform evaluation both on Adv1.1 and Adv4.1. 
  \label{tab_real_supp}}
 \centering
 \resizebox{0.99\linewidth}{!}{
 \begin{tabular}{ l l ccc c c c c c c c c c c c c c} 
 \hline\noalign{\smallskip}
  \# Qubits: &  & 96 & 102	&120&	126&	128&	136&	160&	168&	180&	190&	200&	216&	220&	243 \\
 \noalign{\smallskip}
 \hline
 \noalign{\smallskip} 
 \multirow{2}{*}\textsc{Mode} \cite{ArrigoniPajdla19a} & acc & 0.93 &    0.93 &   0.96 & 0.93 &  \bf  0.97 &    0.97  & \bf 0.98 &    \bf 0.99 &  \bf  0.98 & \bf 0.99 &  \bf 0.99 &    \bf0.93 &  \bf  1 &  \bf 0.94 \\
 &  std & 0.07 &    0.06 &    0.05 &    0.06 &    0.04 &    0.05 &    0.02 &    0.01 &    0.02 & 0.007  &  0.01  &  0.04 &         0 &   0.04 \\
 \noalign{\smallskip}
 \hline
 \noalign{\smallskip} 
 \multirow{2}{*} \textsc{Synch} \cite{ArrigoniPajdla19b} & acc & 0.93 &    0.94 &    0.95 &    0.95 &    0.84 &    0.92 &    0.97 &    1 &    0.89 & 0.95 &    0.90 &    0.94 &    0.99 &    0.92 \\
  &  std & 0.14 &    0.15 &    0.08 &    0.14 &    0.21 &    0.15 &    0.03 &         0 &   0.15 & 0.15 &    0.20 &    0.06 &    0.003 &    0.11\\
   \noalign{\smallskip}
 \hline
 \noalign{\smallskip} 
  \multirow{2}{*}   {Xu et al. \cite{XuCheongAl19}} & acc & 0.89 & 0.89 & 0.94 & 0.75 & 0.96 & 0.97 & 0.86 & 0.86 & 0.97 & 0.88 & 0.96 & 0.77 & 0.83 & 0.74 \\ 
 & std  &  0.18   &   0.12   &   0.08  &    0.14   &   0.05   &   0.04   &   0.18   &   0.16  &    0.04  & 0.10   &   0.06  &    0.09   &   0.15   &   0.05 \\
       \noalign{\smallskip}
 \hline
 \noalign{\smallskip} 
 & acc & \bf 0.97	&	\bf 0.97	& \bf	0.97	& \bf	0.96	&	0.95	& \bf	0.98	& \bf	0.98	& \bf	0.99	& \bf	0.98	& \bf	0.99	& \bf	0.99	&	0.64	&	--	& -- \\
\ourmethod &  std & 0.04	&	0.03	&	0.03	&	0.05	&	0.11	&	0.03	&	0.02	&	0.02	&	0.02	&	0.01	&	0.01	&	0.10	&	--	&	-- \\
( Adv4.1 ) &  $\#$sol & 9	&	5	&	6	&	11	&	6	&	11	&	8	&	9	&	7	&	6	&	9	&	0	&	--	& --	\\
  &  prob & 0.16	&	0.07	&	0.03	&	0.03	&	0.04	&	0.05	&	0.008	&	0.0095	&	0.0097	&	0.0007	&	0.006	&	0	&	--	&	-- \\
   \noalign{\smallskip}
 \hline
 \noalign{\smallskip} 
& acc & 0.96	&	\bf 0.97	&	0.95	&	0.94	&	0.89	&	0.89	&	0.88	&	0.85	&	0.74	&	0.75	&	0.79	&	0.59	&	0.75	&	0.58 \\
\ourmethodsimple  &  std &  0.03	&	0.02	&	0.03	&	0.02	&	0.09	&	0.04	&	0.03	&	0.10	&	0.12	&	0.07	&	0.04	&	0.10	&	0.06	&	0.07\\
 ( Adv4.1 ) &  $\#$sol & 2 & 2 & 0& 0& 0& 0& 0& 0 & 0& 0& 0& 0& 0& 0 \\
  &  prob & 0.0001 & 0.0001 & 0 & 0& 0& 0& 0& 0& 0 & 0& 0& 0& 0& 0\\
     \noalign{\smallskip}
 \hline
 \noalign{\smallskip} 
 & acc & \bf0.97 & \bf	0.97 & \bf	0.97 &\bf 	0.96 & 	0.94	 & 0.97	 & 0.97 & 	0.97 & 	0.97 	& -- &	--	& --&	--	& -- \\
\ourmethod &  std & 0.04 & 	0.02 & 	0.03 & 	0.05 & 	0.11 & 	0.04 & 	0.03 & 	0.03	 & 0.03 &	--	&	-- & -- &	--	& --\\
( Adv1.1 ) &  $\#$sol & 9	&	5	&	6	&	11	&	6	
&	9	&	5	&	3	&	5	&	--	&	--	&	--	&	--	& --	\\
  &  prob & 0.13	&	0.05	&	0.03	&	0.01	&	0.03	&	0.02	&	0.0003	&	0.0002 &	0.0007	&	--	&	-- &	--	&	-- & -- \\
     \noalign{\smallskip}
 \hline
 \noalign{\smallskip} 
& acc &  0.95 &	0.96	 &	0.93 &		0.92 &		0.86 &		0.86 &		0.97 &		0.81 &		0.69 &		0.67	 &	0.71	 &	0.59	 &	0.66 & --	 \\
\ourmethodsimple  &  std & 0.05 &		0.02	 &	0.03	 &	0.03 &		0.09 &		0.06 &		0.03 &		0.07 &		0.07 &		0.08	 &	0.05 &		0.08 &		0.05 & --	\\
 ( Adv1.1 ) &  $\#$sol & 1 & 0 & 0& 0& 0& 0& 5& 0 & 0& 0& 0& 0& 0& -- \\
  &  prob & 0.0001 & 0 & 0 & 0& 0& 0& 0.0003 & 0 &0 & 0& 0& 0& 0& --\\
 \noalign{\smallskip}
 \hline
 \end{tabular}
 }
 \end{table*}

\smallskip

\textbf{Evaluation on Adv1.1.} 
Table \ref{tab_real_supp} reports our evaluation on D-Wave Advantage1.1 (Adv1.1). 
For reference, we also report results on Adv4.1 and those  of competing methods, which are copied from the main paper. 
Results show that the accuracy on Adv1.1 is slightly lower than on Adv4.1, and the latter can solve larger problems. 
This is also reflected in the number of found ground-truth solutions and the probability of finding a ground-truth solution. 
Moreover, larger problems can be minor-embedded to Adv4.1  compared to Adv1.1. 
We also observe that for small problems, the accuracy of our method on Adv1.1 and Adv4.1 is similar, whereas for larger problems, Adv4.1 performs consistently better. 
This observation agrees with the statement done in 
Appendix \textit{A.1 Performance on Native Inputs} of 
the technical report by McGeoch and Farr\'e \cite{PerformanceUpdate}.

\smallskip
\textbf{Hopkins Dataset.} 
Starting from the well-known Hopkins155 dataset \cite{TronVidal07}, we create small problems (with 120-240 qubits) by sampling a subset of images/points from the \textit{cars2$\_$06$\_$g23} sequence, which represents two moving objects in an outdoor environment. See Tab.~\ref{tab_dataset_hopkins} for the details. For each configuration, 20 instances were created, resulting in 400 examples in total. See the main paper for results on this dataset.

\begin{table}[htbp]
\caption{
Statistics of sub-problems sampled from the Hopkins dataset \cite{TronVidal07}. Each configuration has 3 images and 2 motions. In each image, there are $m_1$ points in the first motion and $m_2$ in the second motion. The total number of logical qubits is reported for each configuration, which is given by $ 6(m_1+m_2)$. 
\label{tab_dataset_hopkins}}
\centering
\resizebox{0.9\linewidth}{!}{
\begin{tabular}{ c ccc c c c c c cccc c c c c c cccc c} 
\hline\noalign{\smallskip}
  \# Qubits: & 120   & 126   & 132  &  138   & 144  &  156   & 162  &  168   & 174  &  180  &  186   & 192  &  198   & 204   & 210  & 216  &  222  &  228  &  234   & 240\\
\noalign{\smallskip}
\hline
\noalign{\smallskip} 
 $m_1$ &10 & 10  & 11 &  11 &  12 &  13 &  13 &  14 &  14 &  15 &  15  & 16  & 16  & 17 &  17  & 18 &  18 &  19  & 19 &  20 \\
 $m_2$ &10& 11 &11  & 12  & 12  & 13 &  14 &  14 &  15 &  15  & 16 &  16 &  17 &  17 &  18  & 18  & 19  & 19  & 20 &  20\\
\noalign{\smallskip}
\hline
\end{tabular}
}
\vspace{10pt}
\end{table}

\smallskip 
\textbf{Minor Embeddings.} 
Fig.~\ref{fig:minor_embedding} shows an exemplary minor embedding for a problem with $96$ qubits. Fig.~\ref{fig:minor_embedding}-(left) is a photograph of the Adv4.1 processor. Fig.~\ref{fig:minor_embedding}-(middle) and \hbox{-(right)} show the qubits graph of the logical problem ($96$ nodes) and its embedding into the processor ($803$ nodes), respectively. 
Each node in the logical problem graph is colored according to whether the qubit was measured as one (yellow) or zero (white). The figure on the right shows a section of the processor centered around the region with active qubits. The colors denote qubits measured---according to the Ising model---as ${+}1$ (blue) or ${-}1$ (white).

\begin{figure}[t!] 
  \centering 
\includegraphics[width=1.0\linewidth]{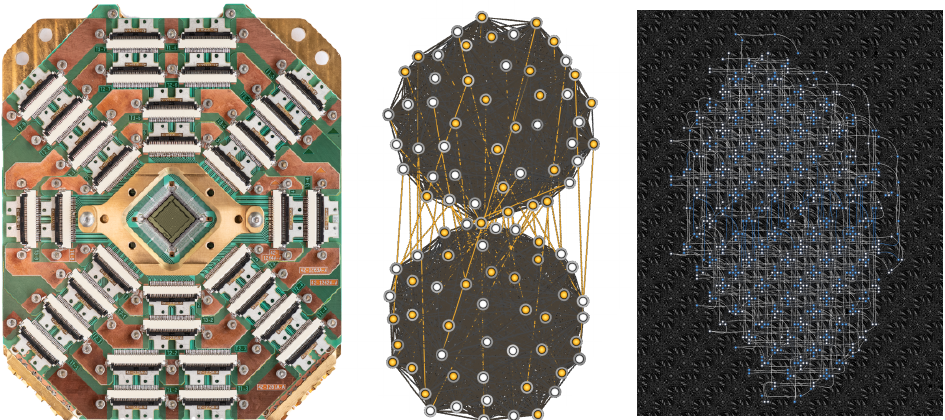} 
\caption{D-Wave Adv4.1 quantum annealer (left; reproduced with permission of D-Wave Systems), graph of $96$ logical problem qubits (middle) and its minor embedding with $803$ physical nodes (right).} 
\label{fig:minor_embedding} 
\end{figure}

\end{document}